\documentclass[12pt]{article}
\pdfoutput=1
\usepackage{amsmath,amsbsy,amssymb,amsthm}
\usepackage{url}
\usepackage{color}
\usepackage{natbib}
\usepackage{graphicx}
\usepackage{thm-restate}
\usepackage{algorithm}
\usepackage{algpseudocode}
\usepackage[]{hyperref}
\usepackage{hyphenat}
\algnewcommand\algorithmicinput{\textbf{Input:}}
\algnewcommand\algorithmicoutput{\textbf{Output:}}
\algnewcommand\Input{\item[\algorithmicinput]}
\algnewcommand\Output{\item[\algorithmicoutput]}
\usepackage[margin=1in]{geometry}

\usepackage{booktabs,tabularx}
\usepackage{multirow}

\usepackage{setspace}
\doublespacing

\newtheorem{Theorem}{Theorem}[section]
\newtheorem{Definition}[Theorem]{Definition}

\newtheorem{Assumption}[Theorem]{Assumption}

\newtheorem{Lemma}[Theorem]{Lemma}

\numberwithin{equation}{section}

\newcommand{\h}{\hspace*{.24in}}

\newcommand{\mc}{\mathcal}

\newcommand{\argmin}{\operatornamewithlimits{argmin}}


\title{\textnormal{Consistent feature selection for neural networks via Adaptive Group Lasso}}

\author{
Vu Dinh\thanks{These authors contributed equally to this work.} \\
Department of Mathematical Sciences \\
University of Delaware \\
Newark, Delaware, USA
\and
Lam Si Tung Ho$^{*}$ \\
Department of Mathematics and Statistics \\
Dalhousie University \\
Halifax, Nova Scotia, Canada
}

\date{}

\begin{document}

\maketitle

\clearpage

\begin{abstract}
One main obstacle for the wide use of deep learning in medical and engineering sciences is its interpretability. While neural network models are strong tools for making predictions, they often provide little information about which features play significant roles in influencing the prediction accuracy. To overcome this issue, many regularization procedures for learning with neural networks have been proposed for dropping non-significant features. Unfortunately, the lack of theoretical results casts doubt on the applicability of such pipelines.
In this work, we propose and establish a theoretical guarantee for the use of the adaptive group lasso for selecting important features of neural networks. Specifically, we show that our feature selection method is consistent for single-output feed-forward neural networks with one hidden layer and hyperbolic tangent activation function. We demonstrate its applicability using both simulation and data analysis.
\end{abstract}


\section{Introduction}
\label{intro}

One main obstacle for the wide use of deep learning in medical and engineering sciences is its interpretability and explainability. 
Due to the complicated nature of deep neural networks and deep learning methods, they have been mostly treated as black-box tools.
Although the model has strong predictive power, it often provides little information about which features play significant roles in influencing prediction accuracy. 
This limitation creates a bottle-neck for the applicability of deep learning in a time when the interpretability of scientific results is becoming increasingly important. 
This issue is even more severe in application contexts where the most important aspect of the learning problem may not be about prediction but about 
feature selection (that is, detecting which inputs control the outputs).
A prime example of such scenarios is the task of identifying genes that increase the risk of cancer.

 Recently, a wide variety of methods have been proposed to perform feature selection for neural networks. 
 A popular approach is to derive a ranking of feature importance based on local perturbations.
This includes fitting a model in the local region around the input or locally perturbing the input to see how the prediction changes \cite{simonyan2013deep, ribeiro2016should, lundberg2017unified, shrikumar2017learning, ching2018opportunities}.
For high-dimensional inputs, high-level representation of the data can also be extracted, for example by an auto-encoder \cite{nezhad2016safs} or a Deep Boltzmann Machine \cite{ibrahim2014multi, taherkhani2018deep}) before a classical feature selection method can be applied. 
Although these methods can provide useful insights, they often focus on specific architectures of the neural nets and can be difficult to generalize \cite{lu2018deeppink}. 

In another direction, researchers employ regularization procedures to drop non-significant features out of the model.                                                                                                           
Regularization methods have been used extensively for reducing the number of parameters in a deep neural network and 
can be naturally adapted to obtain heuristic methods for feature selection by penalizing the weights of the first hidden layer. 
Lasso ($\ell_1$ regularization), which penalizes the sum of absolute values of the weights, is probably the most popular regularization method.
This method, however, is not 
ideal for feature selection because a feature can only be dropped if all of its connections have been shrunk to zero together, an objective that is not actively pursued in the scope of Lasso. 
\citet{zhao2015heterogeneous} and \citet{scardapane2017group} address this concern by utilizing Group Lasso for selecting features of deep neural networks.
Alternatively, \citet{li2016deep} propose adding a sparse one-to-one linear layer between the input layer and the first hidden layer of a neural network and performing $\ell_1$ regularization on the weights of this extra layer.


While pipelines for feature selection for neural networks exist, the lack of theoretical results casts doubt on their applicability to real-world data.
Even worse, several works have indicated that Lasso and Group Lasso could be inconsistent for feature selection \cite{zou2006adaptive, zhang2018non}.
This is further complicated by the fact that neural network models are highly non-linear. 

We tackle this issue directly by proposing and establishing a theoretical guarantee for the use of the Adaptive Group Lasso for selecting important features of neural networks. 
Using the empirical risk minimizer (ERM) or the Group Lasso as an initial estimate,
the Adaptive Group Lasso constructs a data-dependent weighted regularizing function in such a way that as the sample size grows, the penalty for non-significant features get inflated (to infinity), whereas the penalty for significant features are bounded. 
Under this framework, we show that for a single-output feed-forward neural network model with one hidden layer and the hyperbolic tangent activation function, the proposed feature selection method is consistent.
Additionally, we demonstrate the performance of our method in both simulation and data analysis.

\section{Mathematical framework}
\label{sec:math}

\subsection{Generating model}
 
For the simplicity of presentation, we consider a single-output feed-forward neural network for regression with one hidden layer and hyperbolic tangent activation function, where the corresponding numbers of nodes in each layer are $(n_I, n_H, 1)$.
We also separate the inputs into two groups $ s  \in \mathbb{R}^{n_s}$ and $ z  \in \mathbb{R}^{n_z}$ (with $n_s + n_z = n_I$) that denote the significant and non-significant variables, respectively.
An input is non-significant when the true value of all weights associated with it is zero.
We note that this separation is simply for mathematical convenience and the training algorithm is not aware of such dichotomy. 

The forward model is visualized in Figure \ref{fig:network} and can be summarized as
\[
f_{u, v, w, b_1, b_2}( s,  z ) = w \cdot h + b_2
\]
where
\[
h^{[i]} = \tanh \left (\sum_{k=1}^{n_s}{u^{[i, k]} ~s^{[k]}} + \sum_{k=1}^{n_z}{v^{[i, k]} ~z^{[k]}} + b_1^{[i]} \right).
\]
Here, $u \in \mathbb{R}^{n_H \times n_s}$, $v \in \mathbb{R}^{n_H \times n_z}$, $h,w, b_1 \in \mathbb{R}^{n_H}$, $b_2 \in \mathbb{R}$; $h^{[i]}$ denotes the $i$-th component of a vector $h$, and $u^{[i, k]}$ denotes the $[i,k]$-entry of a matrix $u$.

We will study the learning problem in the model-based setting, whereas training data $\{(X_i, Y_i)\}_{i=1}^n$ are independent and identically distributed (i.i.d ) samples generated from $P^*_{X, Y}$ such that
\[
Y_i= f_{u^*, 0, w^*, b_1^*, b_2^*}(X_i) + \epsilon_i
\]
where $\epsilon_i \sim \mathcal{N}(0, \sigma_e^2)$.
Without loss of generality, we assume further that the input density $p_X$ is positive and continuous on its bounded domain and that the weight space (the set of all feasible vectors $\alpha = (u, v, w, b_1, b_2)$ for the model) is a compact set on a Euclidean space.
We note that when $v=0$, the function $f_{u, 0, w, b_1, b_2}$ does not depend on $Z$. 

\begin{figure}
\centering
    \includegraphics[width=0.5\columnwidth]{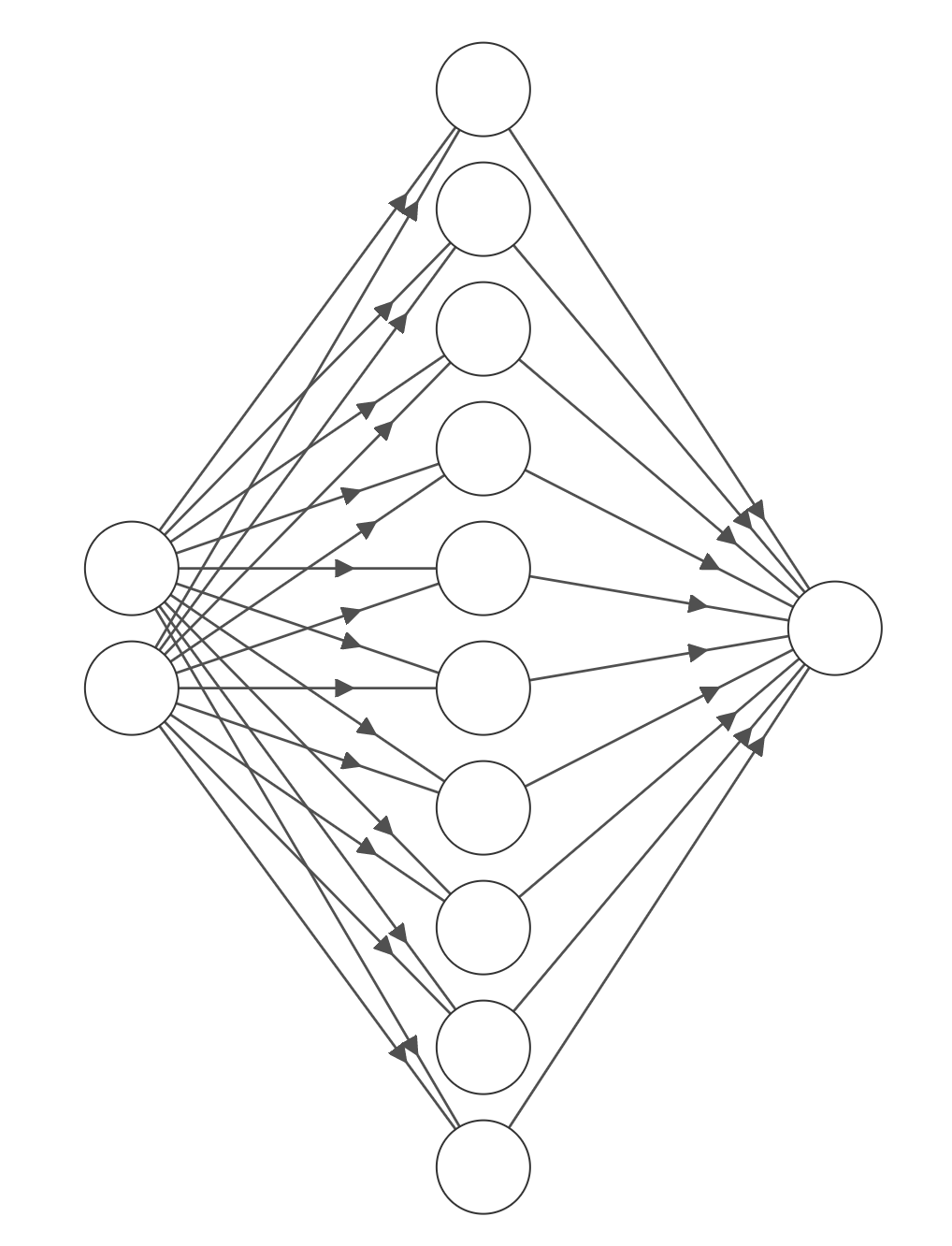}
\caption{A single-output feed-forward neural network for regression with one hidden layer.}
\label{fig:network}
\end{figure}

\subsection{Identifiability of the feed-forward neural network model}

Before moving forward to the description of the feature selection problem, it is worth reminding that one of the distinguishing properties of neural network models is that it is inherently \emph{unidentifiable}. 
A simple rearrangement of the nodes in the same hidden layer, an addition of a constant node or a sign flip of the weights may lead to a new configuration that produces the same input-output mapping. 
These redundancies make it more difficult to analyze the performance of these models in both theoretical and practical perspectives. 

To understand the identifiability of the model, we begin by studying the properties of weight space transformations that leave the network input-output mapping unchanged.
We will refer to the set of all vectors $\alpha = (u, v, w, b_1, b_2)$ as the weight space and denote it by $\mathcal{W}$. 

\begin{Definition}
Two weight vectors $\alpha$ and $\alpha'$ are functionally equivalent (denote by $\alpha \sim_F \alpha'$) if 
\[
f_{\alpha} = f_{\alpha'}.
\]
\end{Definition}

We consider two types of transformations: node interchanges and sign flips. 

\begin{itemize}
\item An interchange consists of a permutation in which the weight vectors of two hidden nodes on the same hidden layer are interchanged.
A compensatory interchange of the weights of the nodes in the next layer that receive the inputs from the two interchanged nodes then removes the effect of the exchange in the previous layer.
\begin{Definition}
Two weight vectors $\alpha$ and $\alpha'$ are \emph{interchange equivalent} (denoted by $\alpha \sim_I \alpha' $)
if there exists a bijection $\pi$ on the set of hidden nodes $\mc{E}$ such that $\alpha_i = \alpha'_{\pi(i)}$ for all $i \in \mc{E}$.
Here, $\alpha_i$ is the set of weights associated with node $i$.
\end{Definition}
\item A sign flip is a transformation where the signs of all of the weights of a node on a hidden layer are multiplied by -1. 
A compensatory sign flip is then carried out on all of the weights of nodes of the next layer associated with the input from the sign-flipped node.
\begin{Definition}
Two weight vectors $\alpha$ and $\alpha'$ are \emph{sign-flip equivalent} (denoted by $\alpha \sim_S \alpha' $)
if for all $i \in \mc{E}$, either $\alpha_i = \alpha'_{i}$ or $\alpha'_i = -\alpha_i$.
\end{Definition}
\end{itemize}

We have the following results.

\begin{Theorem}[\citet{kuurkova1994functionally}]
For a single-output feed-forward neural network with one hidden layer and hyperbolic tangent activation function, functionally equivalence are compositions of interchange and sign flip equivalence.
\[
\sim_F = \sim_I \circ \sim_S.
\]
\end{Theorem}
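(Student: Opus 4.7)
The plan is to proceed in the classical style of Sussmann (1992) and Kůrková--Kainen (1994), extending the network function to a complex analytic function and reading off the hidden-unit parameters from its singularity structure. Throughout, write the networks as
\[
f_\alpha(x) = \sum_{i=1}^{n_H} w_i \tanh(a_i \cdot x + b_{1,i}) + b_2,
\qquad
f_{\alpha'}(x) = \sum_{j=1}^{n_H} w'_j \tanh(a'_j \cdot x + b'_{1,j}) + b'_2,
\]
where $a_i$ collects the input weights $(u_i,v_i)$ feeding hidden node $i$.

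First I would reduce to a canonical form. A hidden node with $w_i=0$ or $a_i=0$ only contributes to the bias and can be deleted (or absorbed into $b_2$); a pair of nodes whose $(a_i,b_{1,i})$ agree up to a global sign can be merged using that $\tanh$ is odd, adjusting the output weight accordingly. After this cleanup, the remaining hidden units have pairwise non-proportional direction/bias data (modulo sign), and the identity $f_\alpha = f_{\alpha'}$ still holds. If I can prove the theorem for reduced networks, the interchange-and-sign-flip equivalence for the original networks follows by reinserting the deleted or merged nodes as trivial components.

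Next I would analytically continue both sides to $\mathbb{C}^{n_I}$. The function $\tanh$ is meromorphic on $\mathbb{C}$ with simple poles exactly at $z = i\pi(k+\tfrac12)$, $k\in\mathbb{Z}$. Therefore $x \mapsto \tanh(a_i\cdot x + b_{1,i})$ extends to a meromorphic function on $\mathbb{C}^{n_I}$ whose polar set is the countable union of complex affine hyperplanes
\[
H_i^{(k)} = \{x \in \mathbb{C}^{n_I} : a_i \cdot x + b_{1,i} = i\pi(k+\tfrac12)\}.
\]
These hyperplanes are parallel with common normal direction $a_i$, and they determine $a_i$ up to a nonzero scalar and $b_{1,i}$ up to an additive multiple of $i\pi$ consistent with that scalar. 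The polar set of $f_\alpha$ is the union of the $H_i^{(k)}$ over all $i,k$, minus hyperplanes along which the residues cancel; in the reduced setting the pairwise non-proportionality rules out such cancellations, so each family $\{H_i^{(k)}\}_k$ appears intact in the singular locus. Equating the singular loci of $f_\alpha$ and $f_{\alpha'}$ then yields a bijection $\pi$ of hidden nodes such that $a'_{\pi(i)}$ is a real scalar multiple of $a_i$ with $b'_{1,\pi(i)}$ matching up to the same factor.

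Then I would pin down the scalar and match the output weights. Comparing residues of $f_\alpha$ and $f_{\alpha'}$ along corresponding hyperplanes forces the scalar to be $\pm 1$ (using that $\tanh$ has minimal period $i\pi$ in its argument, so any other rescaling would shift the pole spacing). This gives signs $\epsilon_i \in \{-1,+1\}$ with $(a'_{\pi(i)},b'_{1,\pi(i)}) = \epsilon_i (a_i, b_{1,i})$. Comparing residues once more, combined with the oddness of $\tanh$, yields $w'_{\pi(i)} = \epsilon_i w_i$. Finally, evaluating the difference $f_\alpha - f_{\alpha'}$ at a single point (or at infinity along a real direction where every $\tanh$ saturates to $\pm 1$) gives $b'_2 = b_2$, and the equivalence $\sim_F = \sim_I \circ \sim_S$ follows.

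The main obstacle is the reduction to the canonical form and the justification that no accidental cancellations of residues occur in the reduced network. Concretely, one must argue that for distinct unordered pairs $\{(a_i,b_{1,i}), (-a_i,-b_{1,i})\}$ the polar hyperplane families $\{H_i^{(k)}\}_k$ are genuinely distinct as subsets of $\mathbb{C}^{n_I}$ and that the nonzero residues on each such family cannot be annihilated by other terms. This is the part where the oddness of $\tanh$ and the precise spacing $i\pi$ of its poles are both used, and it is where the Kůrková--Kainen argument does its real work; everything else is a bookkeeping exercise on top of that singularity match.
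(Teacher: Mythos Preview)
The paper does not prove this statement at all; it is quoted as a result of K\r{u}rkov\'a--Kainen (1994) and used as a black box. Your proposal, by contrast, sketches the actual argument from that literature: analytically continue $\tanh$ to $\mathbb{C}$, read off the hidden-unit parameters from the family of polar hyperplanes, and match residues to recover the output weights and signs. That is the classical route, and for \emph{irreducible} networks it is correct and is precisely what the cited papers do.

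However, your reduction step contains a genuine gap. You claim that after proving the result for reduced (irreducible) networks, ``the interchange-and-sign-flip equivalence for the original networks follows by reinserting the deleted or merged nodes as trivial components.'' This is false, and in fact the theorem as literally stated---with no irreducibility hypothesis---is false. Take two networks on $n_H=2$ hidden nodes that agree on node~1 but both have $w_2=0$; the input weights $a_2$ of the dead node can be set arbitrarily in each network without affecting the input--output map, yet no composition of interchanges and sign flips can turn $(a_2,b_{1,2})=(5,7,0)$ into $(3,2,0)$. The same obstruction arises with merged pairs: if $\alpha$ has two identical nodes with output weights $1$ and $2$, and $\alpha'$ has the same pair with output weights $0$ and $3$, they are functionally equivalent but not related by $\sim_I\circ\sim_S$. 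So the deletion/merging in your canonical-form step is not reversible in the way you assert.

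What rescues the paper is that it only ever applies this theorem to $\alpha^*$, which is irreducible by Assumption~2.7, and the cited K\r{u}rkov\'a--Kainen result is stated under that hypothesis. Your core argument (poles and residues) handles exactly that case. The fix is to drop the reduction paragraph entirely and state the theorem with the irreducibility hypothesis built in; then your singularity-matching argument goes through without the problematic reinsertion step.
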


We recall the following two definitions from \citet{sussmann1992uniqueness} and \citet{fukumizu1996regularity}.

\begin{Definition}
A feed-forward model $f_{u, v, w, b_1, b_2}$ is \textbf{minimal} if no network with fewer hidden units have the same input-output map.

A feed-forward model $f_{u, v, w, b_1, b_2}$ is \textbf{irreducible} if
\begin{itemize}
\item[(i)] $(u^{[i,:]}, v^{[i,:]}) \ne 0$ and $w^{[i]} \ne 0$ for all $i$.
\item[(ii)] For any two different indices $i$ and $j$
\[
(u^{[i,:]}, v^{[i,:]},  b_1^{[i]})  \ne \pm (u^{[j,:]}, v^{[j,:]},  b_1^{[j]}).
\]

\end{itemize}
Here, $u^{[i,:]}, v^{[i,:]}$ denote the $i$-th row of the matrices $u, v$ respectively.
\end{Definition}

We note that the two condition $(i)$ and $(ii)$ guarantee that there are no non-participating or constant hidden nodes and no pair of identical nodes in the generating model.
While the two definitions are related, in general, irreducibility is an intrinsic property that can be verified based on the weights of a network alone, while minimality can only be verified in relative to other feasible networks from the model. 
We have the following theorem that relates the two concepts. 

\begin{Theorem}[\citet{sussmann1992uniqueness}]
For single-output feed-forward neural networks with one hidden layer and the hyperbolic tangent activation function, a network is irreducible if and only if it is minimal.
\label{unique}
\end{Theorem}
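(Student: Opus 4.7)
The plan is to prove both directions by contrapositive, since in each case one gets to use the negated hypothesis constructively.

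For the easy direction, \emph{minimal} $\Rightarrow$ \emph{irreducible}, I would suppose the network violates one of the conditions (i) or (ii) and exhibit a smaller network with the same input-output map, contradicting minimality. If (i) fails at node $i$ because $(u^{[i,:]}, v^{[i,:]}) = 0$, the hidden unit is constant and equals $\tanh(b_1^{[i]})$, so I can delete it and absorb $w^{[i]} \tanh(b_1^{[i]})$ into $b_2$. If instead $w^{[i]}=0$, node $i$ contributes nothing to the output and may simply be removed. If (ii) fails for some pair $i \ne j$, then using that $\tanh$ is odd, the two corresponding hidden activations satisfy $h^{[i]} = \pm h^{[j]}$, so I can merge them into a single node with output weight $w^{[i]} \pm w^{[j]}$. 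Each operation produces an equivalent network with strictly fewer hidden units.

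For the harder direction, \emph{irreducible} $\Rightarrow$ \emph{minimal}, the key ingredient is a linear independence lemma of the type proved by Sussmann: the functions $1, \tanh(a_1 \cdot x + \beta_1), \ldots, \tanh(a_m \cdot x + \beta_m)$ are linearly independent over the space of continuous functions on the input domain, provided each $(a_i,\beta_i) \ne 0$ and the pairs are pairwise distinct modulo sign. I would take this as the analytic core and prove it by the standard route: analytically continue $\tanh(a \cdot x + \beta)$ in the $x$-variable and exploit the fact that $\tanh$ has simple poles at the points $i\pi(k+\tfrac{1}{2})$, so distinct $(a,\beta)$ give poles at distinct locations in $\mathbb{C}^{n_I}$, forcing any linear relation to be trivial.

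Given the lemma, I would argue as follows. Assume a network $N$ with weights $(u,v,w,b_1,b_2)$ is irreducible but not minimal, so there exists a network $N'$ with $m' < m = n_H$ hidden units implementing the same function. Writing $f_N - f_{N'} \equiv 0$ yields a vanishing linear combination of $\tanh$ basis functions together with the constant $1$. I would group together any terms from $N$ and $N'$ whose inner weights are sign-flip equivalent, combining their output coefficients. After this reduction, all remaining tanh arguments are pairwise distinct modulo sign, so the lemma forces every combined coefficient to vanish. Because $N$ is irreducible, every $(u^{[i,:]}, v^{[i,:]}, b_1^{[i]})$ from $N$ is nonzero, distinct modulo sign from every other such triple in $N$, and paired with a nonzero $w^{[i]}$. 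So a node of $N$ can only be cancelled by pairing with a node of $N'$ carrying a sign-flip equivalent inner weight and an output weight forced to equal $\pm w^{[i]}$. This defines an injection from the $m$ hidden nodes of $N$ into the $m'$ hidden nodes of $N'$, contradicting $m' < m$.

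The main obstacle is the linear independence lemma; once it is in hand, both implications are essentially bookkeeping. The subtlety is that the lemma must be precise enough to handle the sign-flip ambiguity built into $\tanh$ (since $\tanh(-z) = -\tanh(z)$), which is exactly what condition (ii) in the definition of irreducibility is designed to rule out, and what makes the resulting identification between the hidden nodes of $N$ and $N'$ unique up to the permutation and sign-flip equivalences of the preceding theorem.
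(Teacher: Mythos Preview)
The paper does not prove this theorem; it is quoted verbatim as a result of \citet{sussmann1992uniqueness} and used as a black box. Your proposal is a faithful reconstruction of Sussmann's original argument, so in that sense you have supplied more than the paper does.

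One small correction to your statement of the linear independence lemma: the hypothesis should be $a_i \ne 0$ for each $i$, not merely $(a_i,\beta_i)\ne 0$. If some $a_i=0$ then $\tanh(a_i\cdot x+\beta_i)=\tanh(\beta_i)$ is constant and certainly linearly dependent with $1$. This does not affect your application, since irreducibility condition~(i) gives exactly $a_i\ne 0$ for the nodes of $N$, and any degenerate nodes of the competing network $N'$ can be absorbed into the constant term before invoking the lemma; but the lemma itself should be stated with the stronger hypothesis.
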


Throughout the rest of the manuscript, we make the following assumption. 

\begin{Assumption}
The generating forward model $f_{\alpha^*}$, where $\alpha^*=(u^*, 0, w^*, b_1^*, b_2^*)$, is irreducible. 
\label{irr}
\end{Assumption}

This assumption, along with Theorem $\ref{unique}$, provides us a way to characterize the set $\mathcal{H}^*$ of all weight vectors that produce the same input-output map as the generating model as follows. 

\begin{Lemma}
Under Assumption $\ref{irr}$, we have
\begin{itemize}
\item [(i)] If $\alpha \in \mc{W}$ is a weigh vector such that
$f_{\alpha} =f_{\alpha^*}$, then $f_{\alpha}$ is also irreducible. 
\item [(ii)]If $ (u, v, w, b_1, b_2 ) \sim_F  \alpha^*$ then $v =0$ and $u^{[:, k]} \ne 0$ for all $k = 1, \ldots, n_s$.

\item [(iii)] The set $ \mc{H}^* =\{\alpha \in \mc{W}: f_{\alpha} = f_{\alpha^*}\}$ is finite. 
\end{itemize}
\label{lem:characterization}

\end{Lemma}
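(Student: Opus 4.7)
The plan is to prove the three parts in order, since (i) supplies the irreducibility hypothesis needed to invoke the Kurkova--Kainen characterization in (ii) and (iii). For (i), I would rely on Theorem \ref{unique}, which equates irreducibility with minimality for this class. Assumption \ref{irr} makes $\alpha^*$ irreducible, hence minimal, so no network with fewer than $n_H$ hidden units realizes $f_{\alpha^*}$. Every $\alpha \in \mc{W}$ has exactly $n_H$ hidden units by construction of the weight space, so any $\alpha$ with $f_\alpha = f_{\alpha^*}$ attains that input--output map on the minimum possible number of nodes, and is therefore itself minimal. Applying Theorem \ref{unique} in the reverse direction returns irreducibility of $f_\alpha$.

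For (ii), with both $\alpha$ and $\alpha^*$ now irreducible, the Kurkova--Kainen theorem applies and yields a permutation $\pi$ of the hidden nodes together with signs $\varepsilon_i \in \{\pm 1\}$ such that the weights of $\alpha$ are obtained from those of $\alpha^*$ by permuting the hidden nodes via $\pi$ and multiplying the complete (incoming and outgoing) weight vector of each node $i$ by $\varepsilon_i$. Since $v^* = 0$, both operations preserve this: permutations only relabel rows of $v$, and sign-flips send $0$ to $0$. Hence $v = 0$. For the second assertion, the premise that each component of $s$ is significant forces $u^{*[:,k]} \neq 0$ for every $k$ (otherwise $s^{[k]}$ would have all associated weights equal to zero, contradicting its classification as significant); row-permutations and entry-wise sign-flips cannot turn a nonzero column of $u$ into a zero column, so the property transfers to $u$.

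For (iii), the same application of the Kurkova--Kainen theorem shows that every $\alpha \in \mc{H}^*$ is specified by a pair $(\pi, (\varepsilon_1, \ldots, \varepsilon_{n_H}))$, giving $|\mc{H}^*| \leq n_H! \cdot 2^{n_H} < \infty$. The only real delicacy in the argument is checking that the hypotheses of the Kurkova--Kainen theorem hold for both $\alpha$ and $\alpha^*$; this is exactly what part (i) supplies, and without it one cannot even begin the reductions in (ii) and (iii). After that is in hand, the remainder is straightforward bookkeeping of how the two allowed transformations act on individual entries of $(u, v, w, b_1)$.
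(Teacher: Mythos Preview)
Your proposal is correct and follows essentially the same route as the paper: use Theorem~\ref{unique} to pass from irreducibility of $\alpha^*$ to minimality, conclude that any $\alpha\in\mc{W}$ with $f_\alpha=f_{\alpha^*}$ is also minimal (hence irreducible) because it uses the same number $n_H$ of hidden units, and then invoke the Kurkova--Kainen characterization to reduce (ii) and (iii) to the observation that node interchanges and sign flips preserve zero/nonzero columns and are finite in number. Your write-up is in fact somewhat more explicit than the paper's, giving the bound $|\mc{H}^*|\le n_H!\,2^{n_H}$ and spelling out why $u^{*[:,k]}\neq 0$ follows from the definition of a significant input.
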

\begin{proof}
Since $f_{\alpha^*}$ is irreducible, Theorem $\ref{unique}$ implies that it is also minimal. 
Let $\alpha \in \mc{W}$ is a weigh vector such that $f_{\alpha} =f_{\alpha^*}$
Since $\alpha \in \mc{W}$, the network $f_{\alpha}$ have the same number of hidden units as $f_{\alpha^*}$. 
We conclude that $f_{\alpha}$ is minimal and thus is irreducible.  

Finally, since we can only obtain the same input-output map through compositions of node-interchange and sign-flip transformations (for which the number of all possible transformations is finite), $\mc{H}^*$ is also finite. 
Each of those transformations also leaves the zero/non-zero components unchanged, which validates $(ii)$.
\end{proof}


\section{Feature selection for neural networks}

In this section, we propose an Adaptive Group Lasso procedure for feature selection for neural networks.
The procedure consists of two steps:
\paragraph{Step 1:} An initial estimate for the weights is obtained using either the empirical risk minimizer (ERM) or Group Lasso.
\paragraph{Step 2:} Using the initial estimate, we build a weighted penalty for a new shrinkage estimator, which is utilized for deciding which features are significant.

The details of these steps are as follows:

\subsection{Empirical risk minimizer and Group Lasso}

For a set of training data $\{(X_i, Y_i)\}_{i=1}^n$,
the ERM estimate is defined by
\[
\hat \alpha_n = \argmin_{\alpha \in \mc{W}}  \frac{1}{n}\sum_{i=1}^n{\ell(\alpha, X_i, Y_i)},
\]
where $\ell(\alpha, x, y) = (y-f_{\alpha}(x))^2$, and the Group Lasso estimate is
\begin{align*}
(\bar u_n, &\bar v_n, \bar w_n, \bar b_{1,n}, \bar b_{2,n}) := \argmin_{(u,  v, w, b_1, b_2) }{ ~~\frac{1}{n}\sum_{i=1}^n{\ell(\alpha, X_i, Y_i)} + \lambda_n L(u, v) },
\end{align*}
where 
\[
L(u, v) =  \sum_{k=1}^{n_s}{\|u^{[:, k]}\|} +  \sum_{k=1}^{n_z}{\|v^{[:, k]}\|},
\]
$\lambda_n$ is an appropriate regularizing constant and $\|\cdot\|$ denotes the usual Euclidean $\ell_2$-norm. 

We note that in the setting of feature selection, only parameters of the first layer (namely, $u$ and $v$) are regularized.
Unlike the classical Lasso penalty (which takes the summation of the absolute values of individual parameters), parameters that associate with the same input are grouped together (through the $\ell_2$-norm) in the Group Lasso. 
This grouping forces all outgoing connections from a non-significant input neuron to be simultaneously shrunk to zero.

\subsection{Adaptive Group Lasso}

\paragraph{ERM initialization:} We define the ERM + Adaptive Group Lasso estimator as
\begin{align*}
(\tilde u_n, &\tilde v_n, \tilde w_n, \tilde b_{1,n}, \tilde b_{2,n}) := \argmin_{(u,  v, w, b_1, b_2) }{ ~~\frac{1}{n}\sum_{i=1}^n{\ell(\alpha, X_i, Y_i)} + \zeta_n M_n(u, v) },
\end{align*}
where 
\begin{align*}
M_n(u, v) = & \sum_{k=1}^{n_s} {\frac{1}{\| \hat u_n^{[:, k]}\|^{\gamma}}\| u^{[:, k]}\|} + \sum_{k=1}^{n_z} {\frac{1}{\| \hat v_n^{[:, k]}\|^{\gamma}}\|v^{[:, k]}\|},
\end{align*}
$\gamma$ is a positive number, $\hat \alpha_n$ is the ERM estimate, and $\zeta_n$ is an appropriate regularizing constant.
Here, we use the convention $0/0=1$. 

Just as with Group Lasso, only the parameters of the first layer are regularized and they are grouped according to the inputs. 
However, the Adaptive Group Lasso penalty aggressively shrinks the parameters associated with non-significant variables to zero by inflating their penalty weights. 

\paragraph{Group Lasso initialization:} We also consider the following Group Lasso + Adaptive Group Lasso estimator:
\begin{align*}
(\check u_n, &\check v_n, \check w_n, \check b_{1,n}, \check b_{2,n}) := \argmin_{(u,  v, w, b_1, b_2) }{ ~~\frac{1}{n}\sum_{i=1}^n{\ell(\alpha, X_i, Y_i)} + \zeta_n G_n(u, v) },
\end{align*}
where 
\begin{align*}
G_n(u, v) = & \sum_{k=1}^{n_s} {\frac{1}{\| \bar u_n^{[;, k]}\|^{\gamma}}\|u^{[:, k]}\|} + \sum_{k=1}^{n_z} {\frac{1}{\| \bar v_n^{[:, k]}\|^{\gamma}}\|v^{[:, k]}\|},
\end{align*}
$\gamma$ is a positive number, $\bar{\alpha_n}$ is the Group Lasso estimate, and $\zeta_n$ is the regularizing parameter. 

\section{Consistent feature selection via Adaptive Group Lasso}

 As we see in the previous section, an Adaptive Group Lasso estimator inherently depends on the initialization. 
Naturally, the performance of an Adaptive Group Lasso depends strongly on theoretical properties of the initial estimator.  
In this section, we first investigate the convergence of the ERM and the Group Lasso, then use such results to establish feature selection consistency of the Adaptive Group Lasso.

\subsection{Convergence of the ERM and the Group Lasso}

First, we define the risk and empirical risk functions as follows:
\begin{align*}
R(\alpha) &= \mathbb{E}_{(X, Y)\sim P_{X,Y}^*}[(f_{\alpha}(X) - Y)^2] \\
R_n (\alpha) &=\frac{1}{n}\sum_{i=1}^n{(f_{\alpha}(X_i) - Y_i)^2}.
\end{align*}
We note that that the set of all minimizers of the function $R(\alpha)$ is $\mc{H}^*$ (the set of all weight vectors that produce the same input-output map as the generating model). 
We have the following results about the risk functions, for which the proofs can be found in the Appendix. 

\begin{Lemma}[Lipschitzness of the risk functions]
\textbf{}

\begin{itemize}
\item $R(\alpha)$ is a Lipschitz function with Lipschitz constant $c_0 > 0$.
\item For any $\delta > 0$, there exists $M_\delta > c_0$ such that $R_n(\alpha)$ is an $M_\delta$-Lipschitz function with probability at least $1 - \delta$.
\end{itemize}
\label{lem:almostLip}
\end{Lemma}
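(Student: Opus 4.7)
The plan is to reduce both statements to a single Lipschitz estimate on the map $\alpha \mapsto f_\alpha(x)$ together with a uniform bound on $|f_\alpha(x)|$, and then to handle the randomness in Part 2 separately by a simple concentration argument. Because the weight space $\mc{W}$ is compact and the input domain is bounded, I would first bound $\nabla_\alpha f_\alpha(x)$ componentwise: $\partial f/\partial w^{[i]} = h^{[i]} \in [-1,1]$, $\partial f/\partial b_2 = 1$, and $\partial f/\partial u^{[i,k]} = w^{[i]}(1-(h^{[i]})^2)s^{[k]}$, with analogous expressions for the $v$ and $b_1$ derivatives. Since $\tanh$ is bounded and all of $(u,v,w,b_1,b_2)$ and $x=(s,z)$ lie in bounded sets, each of these partial derivatives admits a uniform bound, producing a single deterministic constant $L$ with $|f_\alpha(x) - f_{\alpha'}(x)| \leq L\|\alpha-\alpha'\|$ for every $x$ in the input domain and all $\alpha,\alpha'\in\mc{W}$. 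The same compactness argument gives a uniform ceiling $|f_\alpha(x)| \leq C_f$.

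For Part 1, I would apply the identity $(a-y)^2 - (b-y)^2 = (a-b)(a+b-2y)$ to get
\[
|R(\alpha)-R(\alpha')| \leq \mbb{E}\big[|f_\alpha(X)-f_{\alpha'}(X)|\cdot|f_\alpha(X)+f_{\alpha'}(X)-2Y|\big] \leq L\|\alpha-\alpha'\|\big(2C_f + 2\,\mbb{E}|Y|\big).
\]
Since $Y = f_{\alpha^*}(X)+\epsilon$ with $\epsilon \sim \mc{N}(0,\sigma_e^2)$ and $X$ bounded, $\mbb{E}|Y|$ is finite, so one can take $c_0 := L(2C_f + 2\,\mbb{E}|Y|)$.

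Part 2 follows the same pattern at the empirical level: the same identity yields
\[
|R_n(\alpha)-R_n(\alpha')| \leq L\|\alpha-\alpha'\|\Big(2C_f + \frac{2}{n}\sum_{i=1}^n |Y_i|\Big).
\]
The only stochastic term is $\bar{Y}_n := \frac{1}{n}\sum_i |Y_i|$, which I would control by Markov's inequality: $\mbb{E}[\bar{Y}_n] = \mbb{E}|Y|$ for every $n$, so $\Pr[\bar{Y}_n > \mbb{E}|Y|/\delta] \leq \delta$. On the complementary event, setting $M_\delta := L\big(2C_f + 2\,\mbb{E}|Y|/\delta\big)$ (inflated, if necessary, so that $M_\delta > c_0$) gives an $M_\delta$-Lipschitz bound with probability at least $1-\delta$, and the same $M_\delta$ works for every sample size $n$.

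The only genuine obstacle is that the additive noise $\epsilon_i$ is unbounded, which is exactly what forces Part 2 to be stated with high probability rather than deterministically; everything else is compactness-plus-continuity bookkeeping once the gradient estimate is written down, and the finiteness of $\mbb{E}|Y|$ combined with Markov's inequality is enough to close the argument without invoking finer Gaussian tail bounds.
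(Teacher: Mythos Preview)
Your proposal is correct and follows essentially the same route as the paper's proof: both use boundedness and Lipschitzness of $f_\alpha$ (from compactness of $\mc{W}$ and the bounded input domain), the factorization $(a-y)^2-(b-y)^2=(a-b)(a+b-2y)$, and then Markov's inequality on the empirical average of the noise to control the random Lipschitz constant. The only cosmetic difference is that the paper centers around $|\epsilon_i|=|Y_i-f_{\alpha^*}(X_i)|$ rather than $|Y_i|$, which is equivalent up to the additive constant $C_f$.
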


Next, we provide a generalization bound on the deviation of the empirical risk function from its expected value (proof is in the Appendix).  
\begin{Lemma}[Generalization bound]
For any $\delta>0$, there exist $c_1(\delta)>0$ such that
\[
|R_n( \alpha) - R(\alpha) | \le c_1 \frac{\log n}{\sqrt{n}}, \h \forall \alpha \in \mc{W}.
\]
with probability at least $1-\delta$.
\label{lem:generalization}
\end{Lemma}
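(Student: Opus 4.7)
The plan is to reduce the uniform deviation bound to a concentration inequality on a finite $\epsilon$-net of $\mathcal{W}$, and then to interpolate back to all of $\mathcal{W}$ using the Lipschitz property supplied by Lemma~\ref{lem:almostLip}. Compactness of $\mathcal{W}$ and its embedding in a Euclidean space of some fixed dimension $d$ are crucial, because they yield polynomial covering numbers and make Lipschitz extension meaningful.

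The first step is to control the unbounded squared loss that arises from the Gaussian noise. Since $f_{\alpha^*}$ is bounded on the bounded input domain ($\tanh$ is bounded and the weights live in a compact set) and $\epsilon_i\sim\mathcal{N}(0,\sigma_e^2)$, a standard Gaussian tail bound together with a union bound over the $n$ samples gives, for any $\delta>0$, a constant $C(\delta)>0$ such that $\max_{i\le n}|Y_i|\le C(\delta)\sqrt{\log n}$ with probability at least $1-\delta/3$. On this event the per-sample loss $\ell(\alpha,X_i,Y_i)=(Y_i-f_\alpha(X_i))^2$ is uniformly bounded by $C'(\delta)\log n$ for all $\alpha\in\mathcal{W}$ and all $i\le n$.

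Next, let $\mathcal{N}_\epsilon\subset\mathcal{W}$ be a minimal $\epsilon$-net; since $\mathcal{W}$ is a bounded subset of $\mathbb{R}^d$, $|\mathcal{N}_\epsilon|\le (C_0/\epsilon)^d$. Conditional on the boundedness event of the previous step, Hoeffding's inequality at a single $\alpha\in\mathcal{N}_\epsilon$ gives
\[
\mathbb{P}\bigl(|R_n(\alpha)-R(\alpha)|>t\bigr)\le 2\exp\left(-\frac{2nt^2}{(C'(\delta)\log n)^2}\right).
\]
Taking $\epsilon=1/n$ and $t$ of order $\log n/\sqrt{n}$, with a multiplicative constant large enough in $d$ and $\delta$, a union bound over $\mathcal{N}_\epsilon$ makes the probability that the deviation exceeds $t$ anywhere on the net at most $\delta/3$.

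Finally, on the event of probability at least $1-\delta/3$ supplied by Lemma~\ref{lem:almostLip}, both $R$ and $R_n$ are $L$-Lipschitz for $L=c_0+M_{\delta/3}$. For any $\alpha\in\mathcal{W}$ pick the nearest net point $\alpha_0\in\mathcal{N}_\epsilon$, so that $\|\alpha-\alpha_0\|\le 1/n$; then
\[
|R_n(\alpha)-R(\alpha)|\le 2L\|\alpha-\alpha_0\|+|R_n(\alpha_0)-R(\alpha_0)|\le\frac{2L}{n}+c_1\frac{\log n}{\sqrt{n}}.
\]
A final union bound over the three $\delta/3$ events yields the stated conclusion after absorbing constants into $c_1$. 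The main obstacle is the unbounded squared loss caused by the Gaussian noise: the high-probability truncation in the first step is what forces the extra $\log n$ factor into the rate, and it is the only non-routine ingredient; once the loss is effectively bounded, the covering-plus-Lipschitz argument is standard.
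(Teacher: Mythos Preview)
Your covering-plus-Lipschitz skeleton is exactly what the paper does; the only substantive difference is the pointwise concentration step, and that is where your argument falls short of the stated rate.

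First a minor technical point: applying Hoeffding ``conditional on the boundedness event'' is not quite legitimate, because conditioning on $\{\max_i|Y_i|\le C\sqrt{\log n}\}$ couples the $n$ samples and also shifts the mean away from $R(\alpha)$. The standard repair is to work unconditionally with the truncated loss $\ell\wedge B_n$ (which stays i.i.d.\ and bounded), show that truncation changes the expectation by a term that decays faster than any polynomial, and then intersect with the truncation event at the end. This is routine.

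The real gap is quantitative. With the loss bounded by $B_n=C'(\delta)\log n$, Hoeffding gives
\[
\mathbb{P}\bigl(|R_n(\alpha)-R(\alpha)|>t\bigr)\le 2\exp\!\left(-\frac{2nt^2}{B_n^2}\right)
=2\exp\!\left(-\frac{2nt^2}{(C'\log n)^2}\right).
\]
To union-bound over a net of size $(C_0 n)^d$ you need $nt^2/(\log n)^2\gtrsim d\log n$, i.e.\ $t\gtrsim (\log n)^{3/2}/\sqrt{n}$. With your choice $t=c_1\log n/\sqrt{n}$ the exponent is merely a constant while the net size grows polynomially, so the bound blows up rather than decaying. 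Thus your argument as written proves the lemma only with an extra $\sqrt{\log n}$ factor. (Swapping Hoeffding for Bernstein would recover the stated rate, because the variance of $\ell$ is bounded uniformly in $n$ and Bernstein is driven by the variance, not the range, for small $t$.)

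The paper avoids truncation altogether by exploiting the Gaussian structure: for fixed $\alpha$, $nR_n(\alpha)/\sigma_e^2$ is non-central chi-squared with $n$ degrees of freedom and bounded non-centrality (since $f_\alpha-f_{\alpha^*}$ is uniformly bounded). A chi-squared tail bound then yields $\mathbb{P}(|R_n(\alpha)-R(\alpha)|>t/2)\le 2\exp(-C_2 nt^2)$ directly, with no $\log n$ in the exponent. After that, the net-plus-Lipschitz extension via Lemma~\ref{lem:almostLip} is identical to yours, and $t=\mathcal{O}(\log n/\sqrt{n})$ suffices.
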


We also need the following information bound.
\begin{Lemma}
For any $ \alpha \in \mc{H}^*$, there exist $c_2(\alpha)>0$ and a neighborhood $\mc{U}_\alpha$ such that
\[ 
R(\beta) - R(\alpha)  \ge c_2 \|\beta-\alpha\|^2
\]
for all $ \beta \in \mc{U}_\alpha$.
\label{lem:information}
\end{Lemma}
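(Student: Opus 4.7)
The plan is to exploit the fact that $\alpha\in\mathcal H^*$ implies $f_\alpha=f_{\alpha^*}$ almost surely, together with the regularity/positive-definiteness result of \citet{fukumizu1996regularity} for irreducible tanh networks.

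First, I would rewrite the gap in a form that isolates the noise. Since $Y=f_{\alpha^*}(X)+\epsilon$ with $\epsilon$ independent of $X$ and mean zero, expanding the square gives
\[
R(\beta)-R(\alpha)=\mathbb{E}\bigl[(f_\beta(X)-f_{\alpha^*}(X))^2\bigr]-\mathbb{E}\bigl[(f_\alpha(X)-f_{\alpha^*}(X))^2\bigr]=\mathbb{E}\bigl[(f_\beta(X)-f_\alpha(X))^2\bigr],
\]
where the last equality uses $f_\alpha=f_{\alpha^*}$. So the task reduces to lower bounding $\mathbb{E}[(f_\beta(X)-f_\alpha(X))^2]$ quadratically in $\|\beta-\alpha\|$ on a neighborhood of $\alpha$.

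Next, I would Taylor expand the map $\alpha\mapsto f_\alpha(x)$. Because $\tanh$ is real-analytic with all derivatives bounded on bounded sets and the weight space $\mathcal W$ is compact, the mapping $(\alpha,x)\mapsto f_\alpha(x)$ is $C^2$ with derivatives uniformly bounded in $x$ over the (bounded) input domain. Thus
\[
f_\beta(X)-f_\alpha(X)=\nabla_\alpha f_\alpha(X)^{\!\top}(\beta-\alpha)+r(\beta,X),\qquad |r(\beta,X)|\le K\,\|\beta-\alpha\|^2
\]
for a constant $K$ independent of $X$. Squaring and taking expectation yields
\[
\mathbb{E}\bigl[(f_\beta-f_\alpha)^2\bigr]\ge (\beta-\alpha)^{\!\top} I(\alpha)(\beta-\alpha)-2K\|\beta-\alpha\|^3\sqrt{\mathbb{E}\|\nabla f_\alpha(X)\|^2}-K^2\|\beta-\alpha\|^4,
\]
where $I(\alpha):=\mathbb{E}\!\left[\nabla f_\alpha(X)\nabla f_\alpha(X)^{\!\top}\right]$.

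The crux is showing $I(\alpha)$ is positive definite. By Lemma~\ref{lem:characterization}(i), every $\alpha\in\mathcal H^*$ is itself irreducible. The regularity theorem of \citet{fukumizu1996regularity} states that for an irreducible tanh network with an input distribution whose density is positive on an open set (which holds by our assumption that $p_X$ is positive and continuous on its bounded domain), the family of functions $\{x\mapsto\partial_\alpha f_\alpha(x)\}$, viewed coordinatewise, is linearly independent in $L^2(P_X)$. Equivalently, $I(\alpha)\succ 0$. Let $\lambda_\alpha>0$ denote its smallest eigenvalue.

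Finally I would close the argument by choosing the neighborhood small enough so that the cubic and quartic remainders are dominated by half the quadratic main term: if $\|\beta-\alpha\|\le\lambda_\alpha/(4K\sqrt{\mathbb{E}\|\nabla f_\alpha(X)\|^2}+4K^2)$, say, then
\[
R(\beta)-R(\alpha)\ge \tfrac{\lambda_\alpha}{2}\,\|\beta-\alpha\|^2,
\]
which gives the lemma with $c_2=\lambda_\alpha/2$. The main obstacle is the positive-definiteness of $I(\alpha)$; once Fukumizu's regularity theorem is invoked (which requires exactly the irreducibility supplied by Lemma~\ref{lem:characterization}(i) together with the positive density assumption on $p_X$), the rest is a standard Taylor remainder control using compactness of $\mathcal W$ and smoothness of $\tanh$.
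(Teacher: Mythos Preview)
Your proposal is correct and follows essentially the same approach as the paper: both invoke Lemma~\ref{lem:characterization}(i) to get irreducibility of every $\alpha\in\mathcal H^*$, then cite \citet{fukumizu1996regularity} for positive definiteness of the second-order matrix, and finish with a Taylor expansion. The only cosmetic difference is that you first reduce $R(\beta)-R(\alpha)$ to $\mathbb E[(f_\beta-f_\alpha)^2]$ and expand $f_\beta-f_\alpha$, whereas the paper Taylor-expands the loss $\ell(\beta,Y,X)-\ell(\alpha,Y,X)$ directly and identifies the Hessian with (a multiple of) the Fisher information; the resulting quadratic form is the same matrix $\mathbb E[\nabla f_\alpha\nabla f_\alpha^{\!\top}]$ in both cases.
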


\begin{proof}
We recall that the Fisher information matrix $I_{\alpha}$ for a Gaussian model is given by
\[
(I_{\alpha})_{ij} = \frac{1}{2 \sigma_e^2} E [ \nabla^2_{\alpha} \ell(\alpha, Y, X) ].
\]
Lemma \ref{lem:characterization} implies that under Assumption $\ref{irr}$, $f_\alpha$ is irreducible for all $\alpha \in \mc{H}^*$.
By Theorem 1 in \citet{fukumizu1996regularity}, the Fisher information matrix $I_{\alpha}$ is positive definite for all $\alpha \in \mc{H}^*$.
We deduce that
\begin{align*}
\ell(\beta, Y, X) - \ell(\alpha, Y, X) &= 2 (f_\alpha(X) - Y)  (\beta - \alpha) \cdot \nabla_{\alpha} f_\alpha(X) \\
&+  \frac{1}{2}(\beta - \alpha)^t \nabla^2_{\alpha} \ell(\alpha, Y, X)  (\beta - \alpha) + o(\|\beta -\alpha\|^2).
\end{align*}
This implies
\[
R(\beta) - R(\alpha) = \sigma^2_e (\beta - \alpha)^t [I_{\alpha}] (\beta - \alpha) + o(\|\beta -\alpha\|^2)
\]
which completes the proof since the Fisher information matrix is positive definite.
\end{proof}

Combining Lemma $\ref{lem:generalization}$ and $\ref{lem:information}$, we have.
\begin{Theorem}[Convergence of ERM]
For any $\delta>0$, there exist $C_\delta>0$ and $N_\delta>0$ such that for all $n \geq N_\delta$, 
\[
\min_{\alpha \in \mc{H}^*} \|\hat \alpha_n -\alpha\| \le \frac{C_\delta\sqrt{\log n}}{n^{1/4} }
\]
with probability at least $1-\delta$.
\label{erm}
\end{Theorem}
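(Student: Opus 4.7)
The plan is to couple the uniform generalization bound of Lemma \ref{lem:generalization} with the local quadratic lower bound of Lemma \ref{lem:information}, in the style of classical M-estimation. Since $\mc{H}^*$ is exactly the set of minimizers of $R$, every $\alpha \in \mc{H}^*$ shares the same value $R^* := R(\alpha)$. Fix $\delta>0$ and work on the event of probability at least $1-\delta$ on which Lemma \ref{lem:generalization} holds with constant $c_1 = c_1(\delta)$. Since $\hat\alpha_n$ minimizes $R_n$, for any fixed reference point $\alpha^* \in \mc{H}^*$ the telescoping identity
\[
R(\hat\alpha_n) - R^* = [R(\hat\alpha_n) - R_n(\hat\alpha_n)] + [R_n(\hat\alpha_n) - R_n(\alpha^*)] + [R_n(\alpha^*) - R(\alpha^*)]
\]
yields $R(\hat\alpha_n) - R^* \le 2c_1 \log n/\sqrt{n}$, because the middle bracket is nonpositive and the two outer brackets are each controlled by Lemma \ref{lem:generalization}.

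To convert this excess-risk bound into a distance bound, I would use Lemma \ref{lem:characterization}(iii) to note that $\mc{H}^*$ is finite, shrink the neighborhoods $\mc{U}_\alpha$ provided by Lemma \ref{lem:information} to a pairwise disjoint family preserving the quadratic bound, and set $\mc{U} := \bigcup_{\alpha \in \mc{H}^*} \mc{U}_\alpha$ and $c_2 := \min_{\alpha \in \mc{H}^*} c_2(\alpha) > 0$. Because $\mc{H}^*$ is the full set of minimizers of $R$, the complement $\mc{W}\setminus \mc{U}$ is compact and contains no minimizer, so continuity of $R$ gives a strictly positive gap $\eta := \inf_{\beta \in \mc{W}\setminus \mc{U}}[R(\beta)-R^*] > 0$. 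Choosing $N_\delta$ large enough that $2c_1 \log n/\sqrt{n} < \eta$ for all $n \ge N_\delta$ forces $\hat\alpha_n \in \mc{U}$ on the good event, hence $\hat\alpha_n \in \mc{U}_{\alpha_0}$ for a unique $\alpha_0 \in \mc{H}^*$. Lemma \ref{lem:information} then gives
\[
c_2 \|\hat\alpha_n - \alpha_0\|^2 \le R(\hat\alpha_n) - R^* \le 2c_1 \frac{\log n}{\sqrt{n}},
\]
so $\min_{\alpha \in \mc{H}^*}\|\hat\alpha_n - \alpha\| \le \sqrt{2c_1/c_2}\,\sqrt{\log n}/n^{1/4}$, and the conclusion holds with $C_\delta = \sqrt{2c_1(\delta)/c_2}$.

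The main subtlety is the compactness step in the second paragraph: we need that $\mc{H}^*$ is precisely the zero set of $R - R^*$ on the weight space (so the ``outside the neighborhoods'' infimum is strictly positive) and that the local neighborhoods from Lemma \ref{lem:information} can be chosen pairwise disjoint. Both points rely crucially on the finiteness of $\mc{H}^*$ established in Lemma \ref{lem:characterization}(iii), which itself used irreducibility of the generating network via Theorem \ref{unique}. Once this ``valley'' argument is in place, the $n^{-1/4}$ rate is simply the square root of the $n^{-1/2}$ uniform deviation rate inherited from Lemma \ref{lem:generalization}.
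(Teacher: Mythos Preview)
Your proposal is correct and follows essentially the same route as the paper's proof: bound the excess risk via the uniform deviation of Lemma \ref{lem:generalization}, use compactness of $\mc{W}\setminus \mc{U}$ to force $\hat\alpha_n$ into a neighborhood of some $\alpha\in\mc{H}^*$, and then invoke the quadratic lower bound of Lemma \ref{lem:information}. Your version is in fact slightly more careful than the paper's, since you explicitly take a uniform constant $c_2=\min_{\alpha\in\mc{H}^*}c_2(\alpha)$ and shrink the $\mc{U}_\alpha$ to be pairwise disjoint, points the paper glosses over.
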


\begin{proof}

We define
\[
\mc{U} = \bigcup_{\alpha \in \mc{H}^*}{\mc{U}(\alpha)}
\]
where $\mc{U}(\alpha)$ is defined in Lemma $\ref{lem:information}$. 
Since $\mc{W} \setminus \mc{U}$ is compact and $R(\alpha)$ is a continuous function, there exist $C_{\mc{U}}>0$ such that 
\[
R(\alpha) - R(\alpha^*)  \ge C_{\mc{U}} \h \forall \alpha \not\in \mc{U}.
\]
Thus, for all $n$ such that
\[
2  c_1 \frac{\log n}{\sqrt{n}} \le \frac{C_{\mc{U}}}{2}
\] 
we have
\begin{align*}
0 \leq R(\hat \alpha_n)- R(\alpha^*) &\le R_n(\hat \alpha_n) - R_n(\alpha^*) + 2  c_1 \frac{\log n}{\sqrt{n} } \le 2  c_1 \frac{\log n}{\sqrt{n}} < C_{\mc{U}}
\end{align*}
with probability at least $1-\delta$. 
We deduce that $\hat \alpha_n \in \mc{U}$ with probability at least $1-\delta$.
We have
\begin{align*}
c_2 \|\hat \alpha_n - \alpha\|^2 &\le R(\hat \alpha_n)- R( \alpha) \le 2  c_1 \frac{\log n}{\sqrt{n}}
\end{align*}
which completes the proof.
\end{proof}

Similarly, we also have the following theorem for Group Lasso.
\begin{Theorem}[Convergence of Group Lasso]
Assuming that $\lambda_n \to 0$.
For any $\delta>0$, there exist $C_\delta>0$, $N_\delta>0$ such that for all $n \ge N_\delta$, 
\begin{align*}
\min_{\alpha \in \mc{H}^*} \|\bar \alpha_n -\alpha\| &\le C_\delta \left( \frac{\log n}{\sqrt{n} } + \lambda_n^2\right)^{1/2}
\end{align*}
with probability at least $1-\delta$.
\label{group}
\end{Theorem}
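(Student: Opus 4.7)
The plan is to mimic the argument used for the ERM in Theorem \ref{erm}, but with two additional twists arising from the penalty $\lambda_n L(u,v)$: a boundedness argument to show that $\bar\alpha_n$ eventually falls into the good neighborhood $\mc{U}$, and a Lipschitz-plus-AM/GM step to convert the penalty slack into the $\lambda_n^2$ appearing in the claimed rate.

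First I would exploit the optimality of $\bar\alpha_n$ against the true parameter $\alpha^*$ to get
\begin{equation*}
R_n(\bar\alpha_n) - R_n(\alpha^*) \;\le\; \lambda_n\bigl(L(u^*,0)-L(\bar u_n,\bar v_n)\bigr).
\end{equation*}
Since $L$ is continuous on the compact weight space $\mc{W}$, the right-hand side is bounded by $\lambda_n \cdot B$ for some constant $B$. Combined with the uniform generalization bound from Lemma \ref{lem:generalization}, this yields
\begin{equation*}
R(\bar\alpha_n) - R(\alpha^*) \;\le\; 2c_1\frac{\log n}{\sqrt n} + \lambda_n B.
\end{equation*}
Because $\lambda_n\to 0$ and $\log n/\sqrt n\to 0$, for $n$ large enough this is strictly smaller than the constant $C_{\mc{U}}$ used in the proof of Theorem \ref{erm}, so $\bar\alpha_n$ must lie in $\mc{U}(\alpha^\dagger)$ for some $\alpha^\dagger\in\mc{H}^*$, with probability at least $1-\delta$.

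The key quantitative step is the following Lipschitz observation: since $L(u,v)=\sum_k\|u^{[:,k]}\|+\sum_k\|v^{[:,k]}\|$ is a sum of Euclidean norms, the triangle/reverse-triangle inequality combined with Cauchy--Schwarz gives a constant $C_L>0$ with
\begin{equation*}
\bigl|L(u,v)-L(u',v')\bigr| \;\le\; C_L\,\|\alpha-\alpha'\|
\end{equation*}
for every pair of weight vectors. Repeating the optimality argument against $\alpha^\dagger$ itself (which also has zero $v$-block by Lemma \ref{lem:characterization}) and applying this Lipschitz bound gives
\begin{equation*}
R(\bar\alpha_n) - R(\alpha^\dagger) \;\le\; 2c_1\frac{\log n}{\sqrt n} + \lambda_n C_L\,\|\bar\alpha_n - \alpha^\dagger\|.
\end{equation*}
Now I would invoke the information bound of Lemma \ref{lem:information}, which applies because $\bar\alpha_n\in\mc{U}(\alpha^\dagger)$, to obtain
\begin{equation*}
c_2\,\|\bar\alpha_n - \alpha^\dagger\|^2 \;\le\; 2c_1\frac{\log n}{\sqrt n} + \lambda_n C_L\,\|\bar\alpha_n - \alpha^\dagger\|.
\end{equation*}
A straightforward AM--GM application to the cross term, $\lambda_n C_L\|\bar\alpha_n-\alpha^\dagger\|\le \tfrac{c_2}{2}\|\bar\alpha_n-\alpha^\dagger\|^2 + (\lambda_n C_L)^2/(2c_2)$, absorbs the linear term into the quadratic one and produces
\begin{equation*}
\|\bar\alpha_n - \alpha^\dagger\|^2 \;\le\; C_\delta^{\,2}\Bigl(\tfrac{\log n}{\sqrt n}+\lambda_n^2\Bigr),
\end{equation*}
from which the statement follows by taking a square root and using $\|\bar\alpha_n-\alpha^\dagger\|\ge \min_{\alpha\in\mc{H}^*}\|\bar\alpha_n-\alpha\|$.

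I expect the main obstacle, or rather the one place where one must be careful, to be ensuring the correct exponent on $\lambda_n$: a naive bounding of $L(u^*,0)-L(\bar u_n,\bar v_n)$ by a constant would only give $\lambda_n$ in the final rate, whereas exploiting Lipschitzness of $L$ against the (yet to be controlled) quantity $\|\bar\alpha_n-\alpha^\dagger\|$ and then using AM--GM is what produces $\lambda_n^2$. The only other subtlety is the two-stage nature of the argument --- we must first prove that $\bar\alpha_n\in\mc{U}$ using the crude $\lambda_n B$ bound before we are entitled to invoke the quadratic information inequality, and only afterwards can we sharpen the penalty slack to the Lipschitz form.
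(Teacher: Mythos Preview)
Your proposal is correct and follows essentially the same route as the paper's own proof: compare the penalized empirical risk at $\bar\alpha_n$ with that at a true minimizer, use boundedness of $L$ on the compact weight space plus the generalization bound to place $\bar\alpha_n$ in a good neighborhood $\mc{U}(\alpha^\dagger)$, then invoke Lipschitzness of $L$ together with Lemma~\ref{lem:information} and an AM--GM step to upgrade the penalty slack from $\lambda_n$ to $\lambda_n^2$. Your discussion of the two-stage structure and the role of the Lipschitz bound in producing the $\lambda_n^2$ rate is exactly the content of the paper's argument, only stated more explicitly.
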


\begin{proof}
We have
\[
R_n( \bar \alpha_n) + \lambda_n L(\bar \alpha_n)  \le R_n( \alpha^*) + \lambda_n L(\alpha^*) 
\]
which implies 
\begin{align*}
R(\bar \alpha_n)- R( \alpha^*) &\le R_n(\bar \alpha_n) - R_n(\alpha^*) + 2  c_1 \frac{\log n}{\sqrt{n}} \\
&\le 2  c_1 \frac{\log n}{\sqrt{n}} + \lambda_n \left( L(\alpha^*) - L(\bar \alpha_n)\right)
\end{align*}
since $\mc{W}$ is bounded. 
Using the same argument as in the proof of Theorem $\ref{erm}$, we conclude that when $n$ is large enough, $\hat \alpha_n \in \mc{U}(\alpha)$ for some $\alpha \in \mc{H}^*$.
Since $L(\alpha)$ is a Lipschitz function, we have
\begin{align*}
c_2 \|\bar  \alpha_n - \alpha\|^2  &\le 2  c_1 \frac{\log n}{\sqrt{n}} + \lambda_n \left( L(\alpha) - L(\bar \alpha_n)\right)\\
&\le 2  c_1 \frac{\log n}{\sqrt{n}} + \lambda_n C\|\bar \alpha_n - \alpha\|\\
&\le 2  c_1 \frac{\log n}{\sqrt{n}} + \frac{c_2}{2}\|\bar \alpha_n - \alpha\|^2 + \frac{C^2\lambda_n^2}{2 c_2}
\end{align*}
which completes the proof. 
\end{proof}


\subsection{Feature selection consistency of the adaptive group lasso}

\begin{Theorem}[Convergence of ERM + Adaptive Group Lasso]
Assuming that $\zeta_n \to 0$, for all $\delta>0$, there exists $C_{\delta}, N_{\delta}>0$ such that for all $n \ge N_{\delta}$, we have
\[
\min_{\alpha \in \mc{H}^*} \|\tilde \alpha_n -\alpha\|  \le C_{\delta}\left( \frac{\log n}{\sqrt{n}} + \zeta_n \right)^{1/2}
\]
\label{erm-agl}
with probability at least $1-\delta$. 
\end{Theorem}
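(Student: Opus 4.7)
The plan is to follow the structure of the proof of Theorem \ref{group} for the Group Lasso. First, the optimality of $\tilde{\alpha}_n$ in the penalized objective gives the inequality
\[
R_n(\tilde{\alpha}_n) + \zeta_n M_n(\tilde{\alpha}_n) \le R_n(\alpha^*) + \zeta_n M_n(\alpha^*).
\]
Dropping the nonnegative term $\zeta_n M_n(\tilde{\alpha}_n)$ on the left and combining with the uniform generalization bound of Lemma \ref{lem:generalization} yields, with probability at least $1-\delta$,
\[
R(\tilde{\alpha}_n) - R(\alpha^*) \le 2 c_1 \frac{\log n}{\sqrt{n}} + \zeta_n M_n(\alpha^*).
\]

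The central technical step, and the one I expect to be the main obstacle, is bounding $M_n(\alpha^*)$ with high probability. For the $n_z$ terms corresponding to $v$, the numerators $\|v^{*[:,k]}\|$ vanish because $v^* = 0$, so each such term contributes at most $1$ (under the $0/0 = 1$ convention) and the $v$-part contributes at most $n_z$. For the $n_s$ terms corresponding to $u$, I would invoke Theorem \ref{erm} together with Lemma \ref{lem:characterization}(ii): the latter ensures that every element of the finite set $\mc{H}^*$ has $u^{[:,k]} \neq 0$ for each $k = 1, \ldots, n_s$, and hence $m := \min_{\alpha \in \mc{H}^*} \min_k \|u^{[:,k]}\| > 0$ by finiteness of $\mc{H}^*$. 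Theorem \ref{erm} then guarantees that $\|\hat{u}_n^{[:,k]}\| \ge m/2$ for all $k$ with probability at least $1-\delta$ for $n$ sufficiently large, making each denominator in the $u$-part of $M_n(\alpha^*)$ bounded away from zero. Combining the two pieces gives $M_n(\alpha^*) \le C$ with high probability for a constant $C$ depending only on $\gamma$, $m$, and $u^*$.

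Substituting this back yields $R(\tilde{\alpha}_n) - R(\alpha^*) \le C'(\log n/\sqrt{n} + \zeta_n)$ with probability at least $1-2\delta$. From here the argument mirrors the final stage of the proofs of Theorems \ref{erm} and \ref{group}: setting $\mc{U} = \bigcup_{\alpha \in \mc{H}^*} \mc{U}(\alpha)$, the complement $\mc{W}\setminus \mc{U}$ is compact and $R - R(\alpha^*)$ is continuous and strictly positive on it, hence bounded below by some $C_{\mc{U}} > 0$. For $n$ large enough the excess-risk bound above is smaller than $C_{\mc{U}}$, forcing $\tilde{\alpha}_n \in \mc{U}(\alpha)$ for some $\alpha \in \mc{H}^*$. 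Lemma \ref{lem:information} then delivers
\[
c_2\,\|\tilde{\alpha}_n - \alpha\|^2 \le R(\tilde{\alpha}_n) - R(\alpha) \le C'\!\left(\frac{\log n}{\sqrt{n}} + \zeta_n\right),
\]
and taking square roots produces the stated rate. The only genuinely new ingredient relative to Theorem \ref{group} is the ratio analysis used to control $M_n(\alpha^*)$; once the consistency of the ERM initial estimate rules out vanishing denominators on significant features, the rest is a direct transcription of the Group Lasso argument with $\lambda_n L$ replaced by $\zeta_n M_n$.
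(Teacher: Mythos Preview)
Your proposal is correct and follows essentially the same route as the paper: use optimality of $\tilde\alpha_n$ plus the generalization bound to control the excess risk by $\zeta_n M_n(\alpha^*)$, invoke Theorem~\ref{erm} together with Lemma~\ref{lem:characterization}(ii) to keep the $u$-denominators of $M_n(\alpha^*)$ bounded away from zero, and then finish via the compactness/local-curvature argument of Theorem~\ref{erm}. You are in fact slightly more careful than the paper in two places---explicitly treating the $v$-terms of $M_n(\alpha^*)$ via the $0/0=1$ convention and tracking the union-bound $1-2\delta$---but these are presentational refinements, not a different approach.
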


\begin{proof}
By Theorem $\ref{erm}$, for $n$ large enough,
\[
\min_{\alpha \in \mc{H}^*} \|\hat \alpha_n -\alpha\| \le \frac{C_\delta\sqrt{\log n}}{n^{1/4} }
\]
with probability at least $1-\delta$.
Therefore, 
\[
\min_{\alpha \in \mc{H}^*} \|\hat u_n^{[:, k]} - u^{[:, k]}\| \le \frac{C_\delta\sqrt{\log n}}{n^{1/4} }, \h \forall k =1, \ldots, n_s.
\]
By Lemma $\ref{lem:characterization}$, we conclude that $\hat u_n^{[:, k]}$ is bounded away from zero as $n \to \infty$. 
We deduce that for $\alpha \in \mc{H}^*$, 
\[
M_n(\alpha) = \sum_{i=1}^{n_s} \frac{1}{\| u_n^{[:, k]}\|^{\gamma}} \|u^{[:, k]}\|  < \infty.
\]
Thus,
\begin{align*}
R(\tilde\alpha_n)- R( \alpha^*) &\le R_n(\tilde \alpha_n) - R_n(\alpha^*) + 2  c_1 \frac{\log n}{\sqrt{n}} \\
&\le 2  c_1 \frac{\log n}{\sqrt{n}} + \zeta_n \left( M_n(\alpha^*) - M_n(\bar \alpha_n)\right)\\
&\le 2  c_1 \frac{\log n}{\sqrt{n}} + \zeta_n M_n(\alpha^*).
\end{align*}
Since $M_n(\alpha)$ is bounded, using the same argument as in the proof of Theorem $\ref{erm}$, we conclude that when $n$ is large enough, $\tilde \alpha_n \in \mc{U}(\alpha)$ for some $\alpha \in \mc{H}^*$ and there exists $\alpha \in \mc{H}^*$
\[
c_2 \|\bar  \alpha_n - \alpha\|^2 \le R(\tilde\alpha_n)- R( \alpha^*) \le 2  c_1 \frac{\log n}{\sqrt{n}} + \zeta_n M_n(\alpha^*)
\]
which completes the proof.
\end{proof}

We are now ready to prove the main theorem of our paper. 

\begin{Theorem}[Feature selection consistency of ERM + Adaptive Group Lasso]
For $\gamma>0$, $\mu \in (0, \gamma/4)$ and  $\zeta_n =\Omega (n^{-\gamma/4 + \mu})$, then the ERM + Adaptive Group Lasso is consistent for feature selection. 
That is, for any $\delta >0$, there exists $N_\delta$ such that for $n > N_\delta$, $\tilde u_n^{[:, k]} \ne 0, ~ \forall k =1, \ldots, n_s$, and $\tilde v_n^{[:, k]} = 0, ~ \forall k =1, \ldots, n_z$ with probability at least $1 -\delta$.
\end{Theorem}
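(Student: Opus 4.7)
The plan is to combine the convergence rate of $\tilde \alpha_n$ from Theorem \ref{erm-agl} with a first-order (KKT) optimality argument on the non-significant blocks. Throughout, I fix $\delta > 0$ and restrict to the high-probability event on which the conclusions of Theorem \ref{erm} and Theorem \ref{erm-agl} both hold.

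By Theorem \ref{erm-agl}, there exists $\alpha \in \mathcal{H}^*$ with $\|\tilde \alpha_n - \alpha\| \leq C_\delta(n^{-1/2}\log n + \zeta_n)^{1/2} \to 0$. Lemma \ref{lem:characterization}(ii) guarantees that every such $\alpha$ has $u^{[:,k]} \neq 0$ for all significant $k$ and $v=0$. Since $\mathcal{H}^*$ is finite by Lemma \ref{lem:characterization}(iii), the quantity $\min_{\alpha \in \mathcal{H}^*,\, k \leq n_s}\|u^{[:,k]}\|$ is strictly positive, so $\tilde u_n^{[:,k]}$ is bounded away from zero for large $n$, yielding the first conclusion.

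For the non-significant part, I would argue by contradiction via the KKT condition for the $v^{[:,k]}$-block. If $\tilde v_n^{[:,k]} \neq 0$, differentiating the penalized empirical risk gives
\[
\bigl\|\nabla_{v^{[:,k]}} R_n(\tilde \alpha_n)\bigr\| \;=\; \frac{\zeta_n}{\|\hat v_n^{[:,k]}\|^{\gamma}}.
\]
Theorem \ref{erm} provides $\|\hat v_n^{[:,k]}\| \leq C_\delta n^{-1/4}\sqrt{\log n}$ (since the corresponding component of every $\alpha\in\mathcal{H}^*$ is zero), so the right-hand side is at least of order $\zeta_n \cdot n^{\gamma/4}/(\log n)^{\gamma/2} \geq c\,n^{\mu}/(\log n)^{\gamma/2}$, which diverges. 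For the left-hand side I would split
\[
\nabla R_n(\tilde \alpha_n) \;=\; \bigl[\nabla R_n(\tilde \alpha_n) - \nabla R(\tilde \alpha_n)\bigr] + \bigl[\nabla R(\tilde \alpha_n) - \nabla R(\alpha)\bigr],
\]
using $\nabla R(\alpha)=0$ at any $\alpha\in\mathcal{H}^*$. The second bracket is bounded by the Lipschitz constant of $\nabla R$ times $\|\tilde\alpha_n-\alpha\|$, which is $O((n^{-1/2}\log n + \zeta_n)^{1/2})$. The first bracket requires a uniform concentration of $\nabla R_n$ to $\nabla R$, a gradient-level analogue of Lemma \ref{lem:generalization}. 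Putting the pieces together yields $\|\nabla_{v^{[:,k]}} R_n(\tilde \alpha_n)\|\to 0$, contradicting the divergence of the right-hand side. A union bound over $k = 1,\ldots,n_z$ completes the argument.

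The main technical obstacle is the uniform gradient bound $\sup_\alpha \|\nabla R_n(\alpha) - \nabla R(\alpha)\|$. Each coordinate of $\nabla_\alpha \ell(\alpha, X, Y) = -2(Y - f_\alpha(X))\,\nabla_\alpha f_\alpha(X)$ contains the Gaussian residual $\epsilon_i$, so the gradient function class is not uniformly bounded. The workaround is to truncate the residuals at a level of order $\sqrt{\log n}$, apply to the truncated class the same covering-number/Lipschitz argument that underlies Lemma \ref{lem:generalization} (the weight space is compact and $\nabla_\alpha f_\alpha$ is smooth in $\alpha$), and absorb the Gaussian tail into the $\delta$ budget. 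The resulting uniform bound is again of order $\log n/\sqrt{n}$, which is amply small enough given the $n^\mu$ blow-up on the other side.
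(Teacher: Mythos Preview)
Your treatment of the significant blocks matches the paper. For the non-significant blocks your KKT approach is valid in principle, but the paper takes a more elementary route that completely avoids your ``main technical obstacle.'' Rather than differentiating, the paper compares the penalized objective at $\tilde\alpha_n$ with its value at the competitor $g_n$ obtained from $\tilde\alpha_n$ by zeroing out the single block $v^{[:,k]}$. Minimality of $\tilde\alpha_n$ gives
\[
R_n(\tilde\alpha_n) + \zeta_n\,\frac{\|\tilde v_n^{[:,k]}\|}{\|\hat v_n^{[:,k]}\|^{\gamma}} \;\le\; R_n(g_n),
\]
and the right-hand side is controlled not by gradient concentration but by the \emph{function-level} Lipschitz bound of Lemma~\ref{lem:almostLip}: $R_n(g_n)-R_n(\tilde\alpha_n)\le M_\delta\|g_n-\tilde\alpha_n\|=M_\delta\|\tilde v_n^{[:,k]}\|$. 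Dividing through by $\|\tilde v_n^{[:,k]}\|\ne 0$ yields $\zeta_n/\|\hat v_n^{[:,k]}\|^{\gamma}\le M_\delta$, contradicting the divergence you already established. In effect the paper replaces your KKT identity by this one-line inequality, using only Lemma~\ref{lem:almostLip} (already proved) instead of a new uniform bound on $\sup_\alpha\|\nabla R_n(\alpha)-\nabla R(\alpha)\|$. Your argument would also go through once that gradient lemma is established, and it is conceptually closer to the classical adaptive-lasso analysis; the paper's competitor-comparison trick simply buys the conclusion at lower technical cost.
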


\begin{proof}
Theorem $\ref{erm-agl}$ provides that
\[
\min_{\alpha \in \mc{H}^*} \|\tilde \alpha_n -\alpha\|  \le C_{\delta}\left( \frac{\log n}{\sqrt{n}} + \zeta_n \right)^{1/2}
\]
By Lemma $\ref{lem:characterization}$, we conclude that $\tilde u_n^{[:, k]}$ is different from zero for $n$ large enough. 
Next, we will prove that $\tilde v_n =0$.
Note that from Theorem $\ref{erm}$
\[
\|\hat v_n^{[:, k]}\| \le  \frac{C_\delta \sqrt{\log n}}{n^{1/4} }
\]
with probability at least $1-\delta$.
Thus, 
\begin{equation}
\lim_{n \to \infty}{~\zeta_n \frac{1}{\|\hat v_n^{[:, k]}\|^{\gamma}}} \ge C_\delta^{-\gamma} \lim_{n \to \infty}{~\zeta_n \frac{n^{\gamma/4}}{(\log n)^{\gamma/2}}} = \infty.
\label{bound}
\end{equation}
Now, we assume that $\tilde v_n^{[:, k]} \ne 0$ for some $k$ and define a new weight configuration $g_n$ obtained from $\tilde \alpha_n$ by setting the $v^{[:, k]}$ component to $0$.
By definition of the estimator $\tilde \alpha_n$, we have
\[
R_n( \tilde \alpha_n) +  \zeta_n   \frac{1}{\|\hat v_n^{[:, k]}\|^{\gamma}} \|\tilde v_n^{[:, k]}\|\le R_n(g_n).
\]
By Lemma $\ref{lem:almostLip}$, we have
\begin{align*}
 \zeta_n   \frac{1}{\|\hat v_n^{[:, k]}\|^{\gamma}} \|\tilde v_n^{[:, k]}| &\le  R_n(g_n) - R_n( \tilde \alpha_n)\\
 & \le c_0 \|g_n - \tilde \alpha_n\|  = M_{\delta} \|\tilde v_n^{[:, k]}\|.
\end{align*}
with probability at least $1-\delta$. 
Since $\tilde v_n^{[:, k]} \ne 0$, we deduce that
\[
\zeta_n   \frac{1}{\|\hat v_n^{[:, k]}\|^{\gamma}}  \le M_{\delta},
\]
which contradicts $\eqref{bound}$.
This completes the proof. 
\end{proof}

\begin{Theorem}[Feature selection consistency of Group Lasso + Adaptive Group Lasso]
For $\gamma>0$,  $\mu \in (0, \gamma/4)$, $\zeta_n =\Omega (n^{-\gamma/4 + \mu})$ and $\zeta_n = \Omega(\lambda_n^{\gamma+\mu})$ where $\lambda_n$ is the regularizing constant of the initial Group Lasso estimate. 
Then, the Group Lasso + Adaptive Group Lasso is consistent for feature selection. 
\end{Theorem}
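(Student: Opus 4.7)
The plan is to mimic the two-phase argument used for the ERM-initialized estimator in Theorem \ref{erm-agl} and in the following feature selection consistency theorem, substituting the slower Group Lasso rate from Theorem \ref{group} for the ERM rate from Theorem \ref{erm}, and tracking how $\lambda_n$ enters every bound. First I would establish an analogue of Theorem \ref{erm-agl} for $\check\alpha_n$. By Theorem \ref{group}, $\min_{\alpha\in\mc{H}^*}\|\bar\alpha_n-\alpha\|\to 0$, so for each significant coordinate $k$, $\|\bar u_n^{[:,k]}\|$ is eventually bounded below by a positive constant (using Lemma \ref{lem:characterization}(ii), which guarantees $u^{[:,k]}\ne 0$ for every $\alpha\in\mc{H}^*$). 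Consequently the weights $1/\|\bar u_n^{[:,k]}\|^\gamma$ entering $G_n(\alpha^*)$ stay bounded in $n$. Then the defining inequality $R_n(\check\alpha_n)+\zeta_n G_n(\check\alpha_n)\le R_n(\alpha^*)+\zeta_n G_n(\alpha^*)$, combined with Lemma \ref{lem:generalization} and the local information bound of Lemma \ref{lem:information}, yields (with probability at least $1-\delta$ for $n$ large)
\[
\min_{\alpha\in\mc{H}^*}\|\check\alpha_n-\alpha\|\le C_\delta\Bigl(\frac{\log n}{\sqrt n}+\zeta_n\Bigr)^{1/2}.
\]

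In the second phase I would prove feature selection consistency. Since the right-hand side above tends to zero and by Lemma \ref{lem:characterization}(ii) the significant coordinates of every $\alpha\in\mc{H}^*$ are bounded away from zero, $\check u_n^{[:,k]}\ne 0$ eventually for $k=1,\ldots,n_s$. For a non-significant coordinate $k$, I argue by contradiction as in the ERM proof: assuming $\check v_n^{[:,k]}\ne 0$, let $g_n$ be obtained from $\check\alpha_n$ by zeroing out this block. Using optimality of $\check\alpha_n$ together with the uniform Lipschitz estimate of Lemma \ref{lem:almostLip}, one obtains
\[
\zeta_n\,\frac{1}{\|\bar v_n^{[:,k]}\|^{\gamma}}\le M_\delta,
\]
so it suffices to prove that $\zeta_n/\|\bar v_n^{[:,k]}\|^\gamma\to\infty$ in order to derive a contradiction.

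The main obstacle is exactly this last divergence, because unlike in the ERM-initialization case the size of $\|\bar v_n^{[:,k]}\|$ is controlled by two competing quantities rather than one. Applying subadditivity of $t\mapsto\sqrt t$ and then of $t\mapsto t^{\gamma}$ to the bound from Theorem \ref{group} gives
\[
\|\bar v_n^{[:,k]}\|^{\gamma}\le C\Bigl(\frac{(\log n)^{\gamma/2}}{n^{\gamma/4}}+\lambda_n^{\gamma}\Bigr),
\]
so $\zeta_n/\|\bar v_n^{[:,k]}\|^\gamma\to\infty$ as soon as both $\zeta_n\,n^{\gamma/4}/(\log n)^{\gamma/2}\to\infty$ and $\zeta_n/\lambda_n^{\gamma}\to\infty$. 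These are precisely the two hypotheses $\zeta_n=\Omega(n^{-\gamma/4+\mu})$ and $\zeta_n=\Omega(\lambda_n^{\gamma+\mu})$ in the statement; the second has no counterpart in the ERM-initialized theorem and is what pays for the inferior convergence rate of the Group Lasso initializer. Once this divergence is in hand, the contradiction closes the proof exactly as in the ERM case.
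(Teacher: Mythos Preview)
Your proposal is correct and follows exactly the route the paper takes: its proof consists of a single sentence saying the argument is identical to the ERM + Adaptive Group Lasso case except that one uses the Group Lasso bound $\|\bar v_n^{[:,k]}\|\le C\bigl(\log n/\sqrt{n}+\lambda_n^2\bigr)^{1/2}$ from Theorem \ref{group}, which is precisely the substitution you carry out and whose consequences you spell out in more detail than the paper does.
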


\begin{proof}
The proof is similar to that of the ERM + Adaptive Group Lasso, with the only exception that 
\[
\|\bar v_n^{[:, k]}  \| \le C \left( \frac{\log n}{\sqrt{n} } + \lambda_n^2 \right)^{1/2}.
\]
with probability $1-\delta$. 
\end{proof}

\section{Simulation studies}

We use synthetic data to investigate the performance of our feature selection procedure. 
Throughout the experiment, we consider single-output feed-forward neural networks with one hidden layer of ten hidden nodes. 
Two separate sets of experiments are considered. 
In the first set, one of the two input variables is non-significant while the other is significant.
In the second set of simulations, two of the five input variables are non-significant and the other three are. 

In each set of experiments, we simulate datasets of size $n=1000$ from the generative model 
\[
Y= f_{u^*, 0, w^*, b_1^*, b_2^*}(X) + \epsilon
\]
where the noise $\epsilon^{[i]}$ is sampled independently from $\mathcal{N}(0, \sigma^2_e)$ with various values of $\sigma_e^2$. 
Each component of $X, b_1^*, b_2^*$ is sampled independently from $\mathcal{N}(0,1)$, and each component of $u^*, w^*$ is independently sampled from $ \mathcal{N}(1,1)$. 

\begin{figure}[ht]
\centering
    \includegraphics[width=0.47\textwidth]{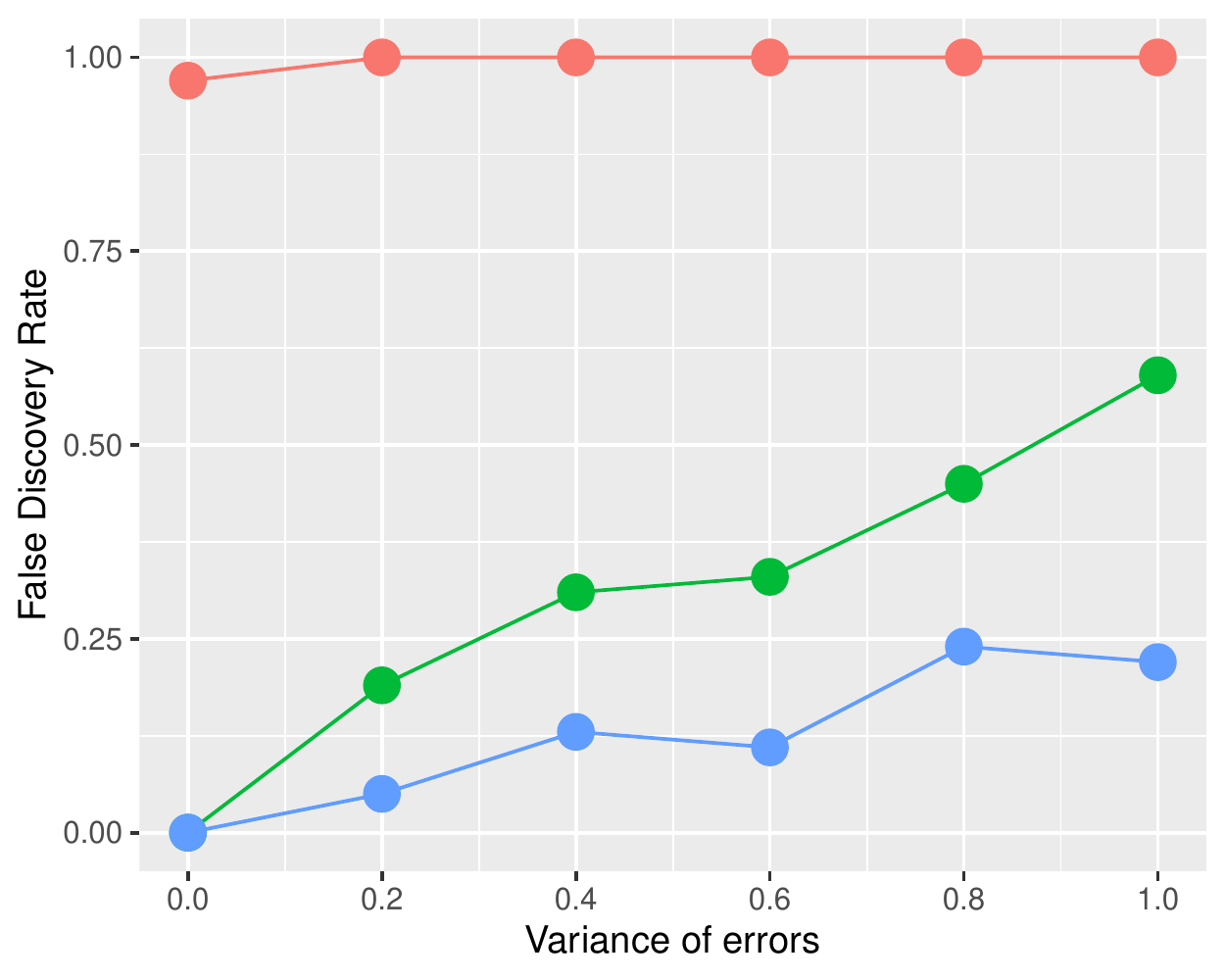}
 \includegraphics[width=0.47\textwidth]{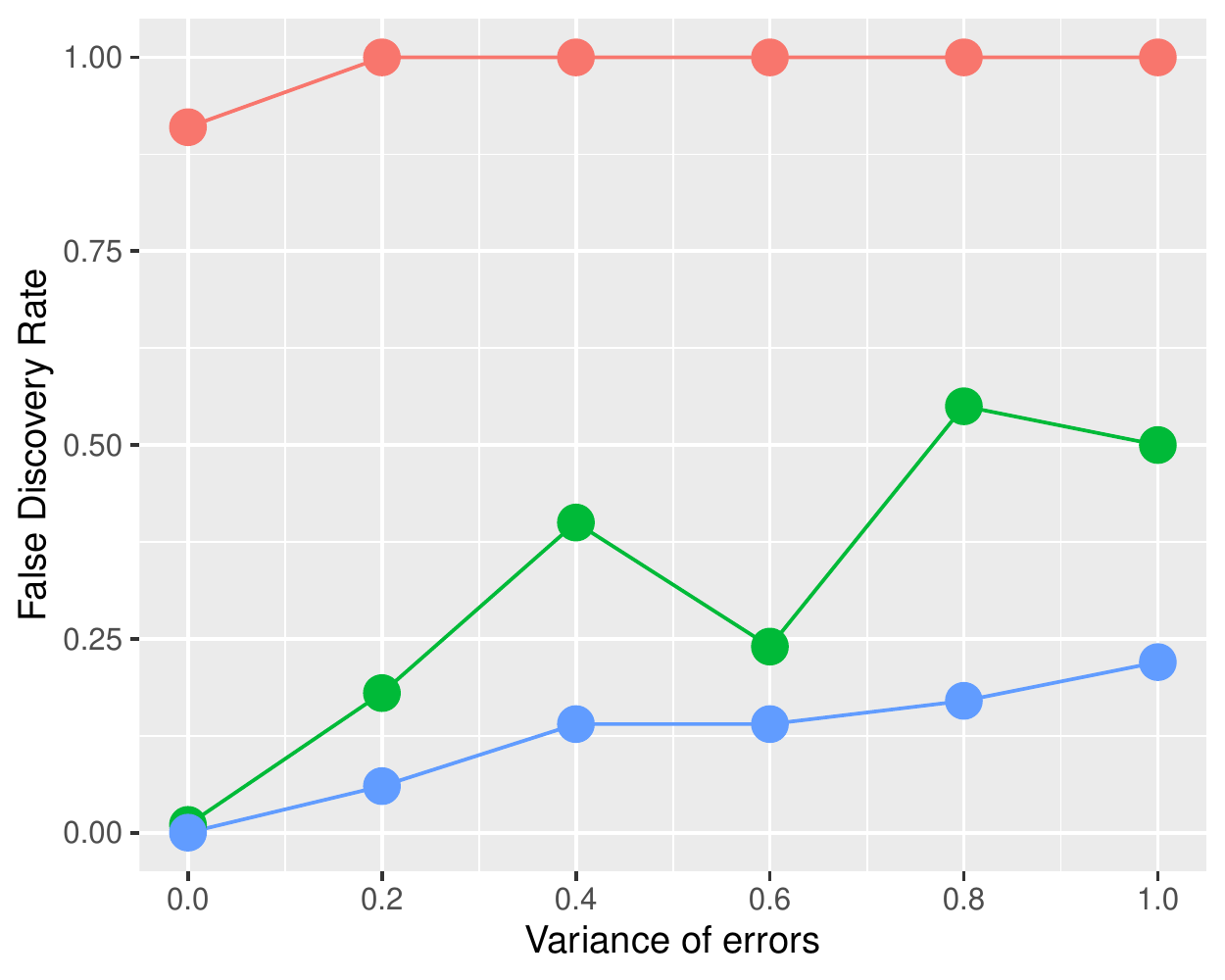}
 \caption{False Discovery Rates (FDRs) of the two non-significant variables (each subfigure corresponds to a non-significant variable) in the second set of experiment with varying degrees of noise by the three feature selection algorithms:  the Group Lasso (red curves), the ERM + Adaptive Group Lasso (green), and Group Lasso + Adaptive Group Lasso (blue).}
\label{fig:5var}
\end{figure}

We are interested in the performance of our feature selection procedures with various noise levels. 
For each $\sigma_e^2$ in $\{0, 0.2, 0.4, 0.6, 0.8, 1\}$, we simulate 100 datasets of size $n=1000$. 
We apply three methods of feature selection (the Group Lasso, the ERM + Adaptive Group Lasso, and Group Lasso + Adaptive Group Lasso) on each simulated dataset with $\gamma = 2$.
The regularizing constant is chosen from the set $\{0.001, 0.01, 0.1, 1\}$ using three-fold cross-validation. 
The algorithms are trained using Adam optimizer over 10000 epochs with batch-size of 200, of which the codes are created using Python package \emph{lasagne} \cite{lasagne} and Python library \emph{theano} \cite{team2016theano}. 
Following the convention used in \citet{scardapane2017group}, we deselect a variable if the $\ell_2$-norm of the groups of parameters associated with that variable is below the cut-off value $10^{-3}$.

The result is presented in Figure \ref{fig:5var}, where the False Discovery Rate (FDR) of the two non-significant variables in the second set of experiments (with 5 input variables) for each of the algorithms are reported. 
A similar result for the first set of experiments is also provided in the Appendix. 
In both sets of experiments, we observe that across all algorithms, the FDR tends to increase as the variance of the error increases. 
The Group Lasso + Adaptive Group Lasso perform the best, while the Group Lasso fails to recognize the non-significant variables when there are errors in the model ($\sigma^2_e >0$). 
The result indicates that for highly non-linear models such as neural networks, standard Lasso-type algorithms may not be aggressive enough to enforce sparsity and data-dependent approaches such as Adaptive Lasso might be necessary for effective feature selection.
On the other hand, it is worth noting that all three methods can detect the significant variables efficiently.

\section{Data analysis}

\begin{figure*}
\centering
    \includegraphics[width=1\textwidth]{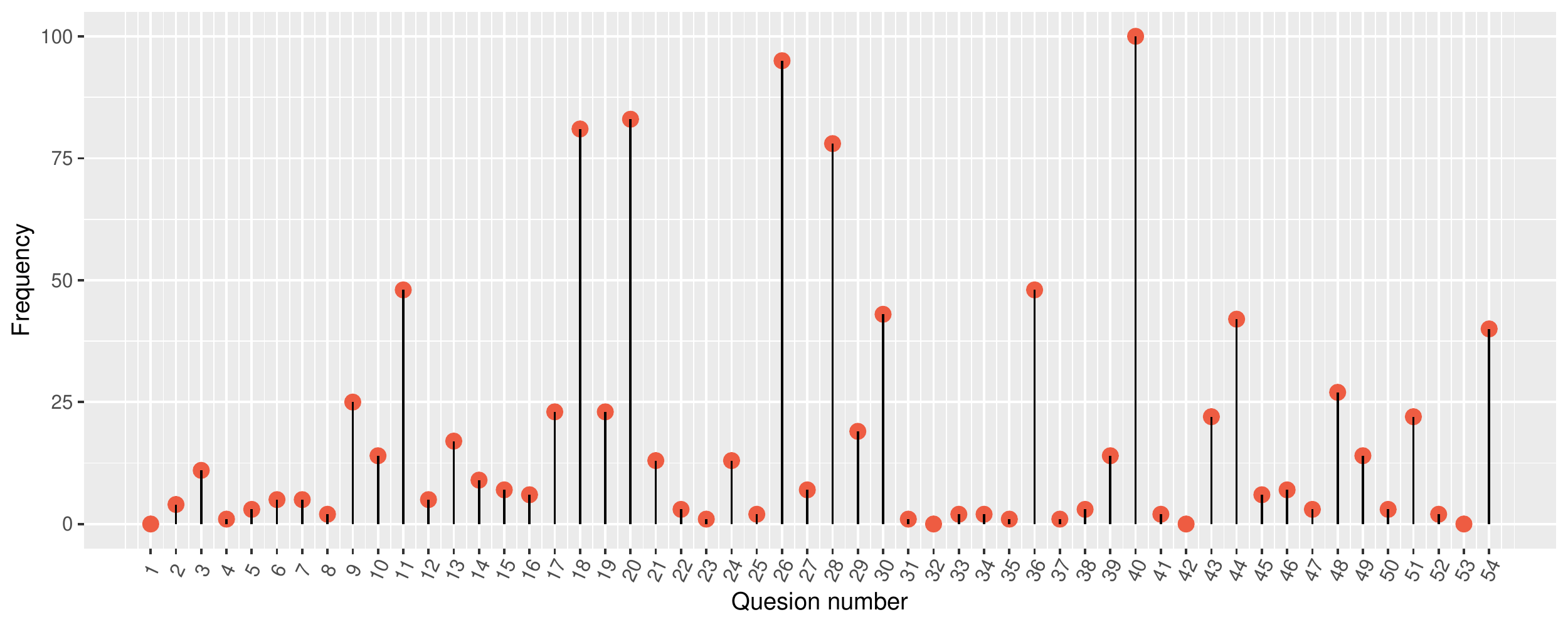}
\caption{Frequency of being selected by Group Lasso + Adaptive Group Lasso of each feature, computed out of 100 runs of the algorithm.}
\label{fig:analysis}
\end{figure*}

In addition to the simulation examples presented in the previous section, we also demonstrate the practical use of our feature selection approach to analyze the Divorce Prediction dataset \citep{yontem2019divorce} from the UCI Machine Learning Repository \cite{Dua:2019}.
The dataset consists of survey data from 170 married couples, where each participant rates 54 statements about their marriage on the scale from 0--4 (a detailed list of the statements are provided in the Appendix). 
The dataset was collected from seven different regions of Turkey (but predominantly from the Black Sea region) through face-to-face interviews and via Google Drive. 
Among the participants, 49\% were divorced and 51\% were married couples. 

Based on the success of its performance in simulation studies, we apply the Group Lasso + Adaptive Group Lasso to analyze the data set. 
In this analysis, we also use the feed-forward network structure with one hidden layer and the hyperbolic tangent activation function. 
Since the data is binary, we further add a sigmoid activation function to the output layer of the network. 
The model is then trained using Adam optimizer with binary cross-entropy loss over 10000 epochs with batch-size of 32. 
The 3-fold cross-validation is employed to select the regularizing parameter $\lambda$ from the set $\{0.001, 0.01, 0.1, 1, 2, 4, 8, 16 \}$. 
To take into account the stochasticity of the selection procedure, we run the algorithm 100 times and record the number of times each feature is chosen by the algorithm.
The result is presented in Figure \ref{fig:analysis}, which shows that the five most important variables are selected more than 75 times while all other variables appear fewer than 50 times on the selections. 

To validate the result, we fit the dataset on the full set of variables and on the selected set of five variables using ERM algorithm with the same neural network structure and compare their performances.
In each iteration, the dataset is split into a training set and a test set by a ratio of 75-25. 
We use ERM to train on the training set with the binary cross-entropy loss and compute the accuracy on the test set. 
The process is repeated 100 times and the average testing accuracy for the full model and the selected model are 97.7\% and 98.1\%, respectively.
The result indicates that our feature selection procedure can select the most important features for prediction.

\section*{Discussions and future work}

In this work, we propose the use of the Adaptive Group Lasso for selecting important features of neural networks.
Under mild regularity conditions, we establish feature selection consistency of two versions of the Adaptive Group Lasso on three-layer neural networks with hyperbolic tangent activation function. 
We have demonstrated the efficiency and effectiveness of our methods with both synthetic and real datasets. 
We show that the standard Group Lasso estimator has difficulty identifying the input support while the Adaptive Group Lasso can accurately and consistently select the correct set of significant variables. 

There are several avenues for future directions of this work. 
Firstly, \citet{sussmann1992uniqueness} and \citet{fukumizu1996regularity} outlined conditions of the activation functions for which characterization of functional equivalent neural networks can be obtained.
This might be used to extend our results to other activation functions, including the sigmoid and the leaky-ReLU functions. 
Secondly, the possibility of extending the results to deeper networks relies mainly on re-establishing Lemma $\ref{lem:characterization}$, which dictates that: \emph{there is no alternative weight vector with different input support but has the same input-output mapping as the generating model}. 
We note that if this condition fails, then the feature selection problem is ill-defined and it is not possible to do feature selection in this scenario.
Further understanding about the landscape of deep neural networks will provide additional insights about when feature selection is possible and how our method can be adapted to tackle the problem. 

\section*{Acknowledgement}

LSTH was supported by startup funds from Dalhousie University, the Canada Research Chairs program, the NSERC Discovery Grant RGPIN-2018-05447, and the NSERC Discovery Launch Supplement DGECR-2018-00181.

\clearpage

\bibliographystyle{chicago}
\bibliography{alg}

\clearpage

\onecolumn

\appendix
\section{Proofs}
\subsection{Lemma \ref{lem:almostLip}}

Since the hyperbolic tangent function is a bounded Lipschitz function and the weight space $\mc{W}$ is bounded, $f_{\alpha}(X)$ is also bounded.
Therefore, 
\begin{align*}
|R(\alpha) - R(\alpha')| &\leq C_1 \| \alpha - \alpha' \| \cdot \mathbb{E}_{(X, Y)\sim P_{X,Y}^*}| f_{\alpha}(X) + f_{\alpha'}(X) - 2Y | \\
&\leq C_2 \| \alpha - \alpha' \| (C_2 + 2\mathbb{E}_{(X, Y)\sim P_{X,Y}^*}| Y - f_{\alpha^*}(X) | ) \\
&\leq c_0 \| \alpha - \alpha' \|.
\end{align*}
Similarly,
\begin{align*}
| R_n(\alpha) - R_n(\alpha') | & \leq C_1 \| \alpha - \alpha' \| \left ( C_2 + \frac{2}{n}\sum_{i=1}^n{| Y_i - f_{\alpha^*}(X_i) |} \right ) \\
&= C_1 \| \alpha - \alpha' \| \left ( C_2 + \frac{2}{n}\sum_{i=1}^n{| \epsilon_i |} \right ).
\end{align*}
The proof is completed by noting
\[
P \left (\frac{1}{n}\sum_{i=1}^n{| \epsilon_i |} > M \right ) \leq \frac{E | \epsilon_1 |}{M}
\]
that for any $M > 0$.

\subsection{Lemma \ref{lem:generalization}}

Note that $n R_n( \alpha)/\sigma^2_e$ follows a  non-central chi-squared distribution with $n$ degrees of freedom and $f_\alpha(X)$ is bounded.
By applying Theorem 7 in \citet{zhang2018}, we have
\begin{align*}
& \mathbb{P}\left[ | R_n( \alpha) - R(\alpha) | > t/2 \right] \\
& \leq  2\exp \left (- \frac{C_1 n^2 t^2}{n + 2 \sum_{i=1}^n{[f_\alpha(X) - f_{\alpha^*}(X)]^2} } \right ) \\
& \leq 2\exp(- C_2 n t^2),
\end{align*}
for all
\[
0 < t < \frac{n + \sum_{i=1}^n{[f_\alpha(X) - f_{\alpha^*}(X)]^2}}{n}.
\]
We define the events
\[
\mc{A}(\alpha, t) = \{|R_n( \alpha) - R(\alpha) | > t/2 \},
\]
\begin{align*}
\mc{B}(\alpha, t) = \{&\exists \alpha' \in \mc{W}~\text{such that}~\\
&\|\alpha'-\alpha\| \le \frac{t}{4M_\delta}~ \text{and}~ |R_n( \alpha') - R(\alpha') | > t \},
\end{align*}
and
\[
\mc{C} = \{ |R_n( \alpha) - R_n( \alpha')| \leq M_\delta \| \alpha - \alpha' \|, \forall \alpha, \alpha' \in \mc{W} \}.
\]
Here, $M_\delta$ is defined in Lemma \ref{lem:almostLip}.
By Lemma \ref{lem:almostLip}, $\mc{B}(\alpha, t) \cap \mc{C} \subset \mc{A}(\alpha, t)$ and $P(\mc{C}) \geq 1 - \delta$.

Let $m=dim(\mc{W})$, there exist $C_3(m) \ge 1$ and a finite set $\mathcal{H} \subset \mc{W}$ such that
\[
\mc{W} \subset \bigcup_{\alpha \in \mathcal{H}}{\mc{V}(\alpha, \epsilon)} \h \text{and}\h  |\mathcal{H}| \le C_3 /\epsilon^{m}
\]
where $\epsilon=t/(4M_\delta)$, $\mc{V}(\alpha, \epsilon)$ denotes the open ball centered at $\alpha$ with radius $\epsilon$, and $|\mathcal{H}|$ denotes the cardinality of $\mathcal{H}$.
By a union bound, we have
\[
\mathbb{P}\left[ \exists \alpha \in \mathcal{H}: \left |R_n( \alpha) - R(\alpha)\right |> t/2\right]  \le 2 \frac{C_3 (4M_\delta)^m}{t^m}e^{- C_2 n t^2}.
\]
Using the fact that $\mc{B}(\alpha, t) \cap \mc{C} \subset \mc{A}(\alpha, t), ~\forall \alpha \in \mathcal{H}$, we deduce
\[
\mathbb{P}\left[ \{ \exists \alpha  \in \mc{W}: \left |R_n( \alpha) - R(\alpha)\right |> t \right \} \cap \mc{C}]  \le C_4 t^{-m}e^{- C_2 n t^2}.
\]
Hence,
\[
\mathbb{P}\left[ \{ \exists \alpha  \in \mc{W}: \left |R_n( \alpha) - R(\alpha)\right |> t \right \}]  \le C_4 t^{-m}e^{- C_2 n t^2} + \delta.
\]

To complete the proof, we chose $t$ in such a way that $C_4 t^{-m}e^{- C_2 n t^2} ~ \le~ \delta$.
This can be done by choosing $t = \mathcal{O}(\log n /\sqrt{n})$. 

\section{Simulation studies}

\begin{figure}[ht]
\centering
    \includegraphics[width=0.47\textwidth]{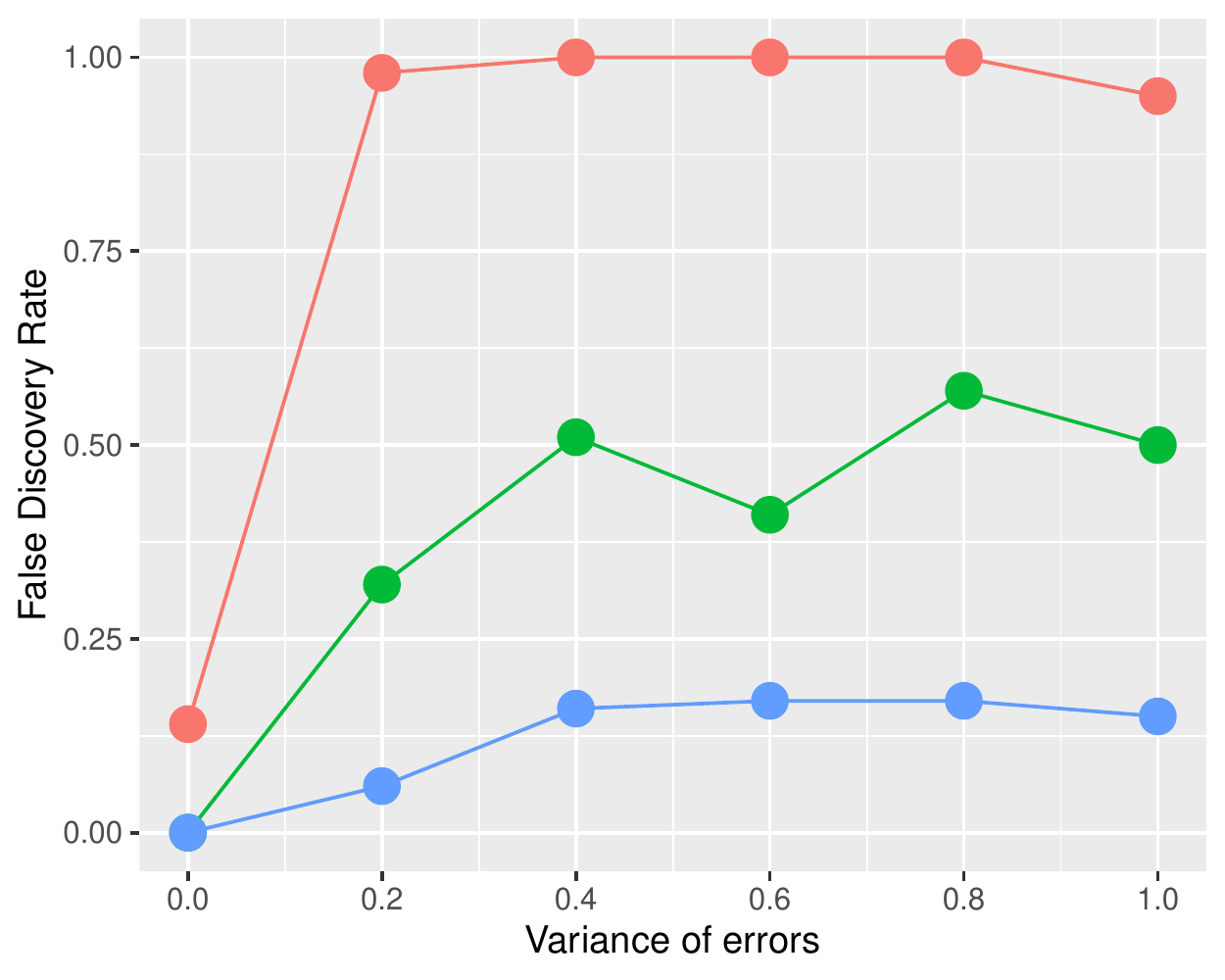}
 \caption{False Discovery Rates (FDRs) of the non-significant variable in the first set of experiment with varying degrees of noise by the three feature selection algorithms:  the Group Lasso (red curves), the ERM + Adaptive Group Lasso (green), and Group Lasso + Adaptive Group Lasso (blue).}
\label{fig:2var}
\end{figure}

\section{Divorce Predictors Dataset }

This dataset is available on the UCI Machine Learning Repository (\url{https://archive.ics.uci.edu/ml/datasets/Divorce+Predictors+data+set}.) The list of $54$ statements of the data are
\begin{enumerate}
\item If one of us apologizes when our discussion deteriorates, the discussion ends. 
\item I know we can ignore our differences, even if things get hard sometimes. 
\item When we need it, we can take our discussions with my spouse from the beginning and correct it. 
\item When I discuss with my spouse, to contact him will eventually work. 
\item The time I spent with my wife is special for us. 
\item We don't have time at home as partners. 
\item We are like two strangers who share the same environment at home rather than family. 
\item I enjoy our holidays with my wife. 
\item I enjoy traveling with my wife. 
\item Most of our goals are common to my spouse. 
\item I think that one day in the future, when I look back, I see that my spouse and I have been in harmony with each other. 
\item My spouse and I have similar values in terms of personal freedom. 
\item My spouse and I have similar sense of entertainment. 
\item Most of our goals for people (children, friends, etc.) are the same. 
\item Our dreams with my spouse are similar and harmonious. 
\item We're compatible with my spouse about what love should be. 
\item We share the same views about being happy in our life with my spouse. 
\item My spouse and I have similar ideas about how marriage should be. 
\item My spouse and I have similar ideas about how roles should be in marriage. 
\item My spouse and I have similar values in trust. 
\item I know exactly what my wife likes. 
\item I know how my spouse wants to be taken care of when she/he sick. 
\item I know my spouse's favorite food. 
\item I can tell you what kind of stress my spouse is facing in her/his life. 
\item I have knowledge of my spouse's inner world. 
\item I know my spouse's basic anxieties. 
\item I know what my spouse's current sources of stress are. 
\item I know my spouse's hopes and wishes. 
\item I know my spouse very well. 
\item I know my spouse's friends and their social relationships. 
\item I feel aggressive when I argue with my spouse. 
\item When discussing with my spouse, I usually use expressions such as `you always' or `you never'. 
\item I can use negative statements about my spouse's personality during our discussions. 
\item I can use offensive expressions during our discussions. 
\item I can insult my spouse during our discussions. 
\item I can be humiliating when we discussions. 
\item My discussion with my spouse is not calm. 
\item I hate my spouse's way of open a subject. 
\item Our discussions often occur suddenly. 
\item We're just starting a discussion before I know what's going on. 
\item When I talk to my spouse about something, my calm suddenly breaks. 
\item When I argue with my spouse, I only go out and I don't say a word. 
\item I mostly stay silent to calm the environment a little bit. 
\item Sometimes I think it's good for me to leave home for a while. 
\item I'd rather stay silent than discuss with my spouse. 
\item Even if I'm right in the discussion, I stay silent to hurt my spouse. 
\item When I discuss with my spouse, I stay silent because I am afraid of not being able to control my anger. 
\item I feel right in our discussions. 
\item I have nothing to do with what I've been accused of. 
\item I'm not actually the one who's guilty about what I'm accused of. 
\item I'm not the one who's wrong about problems at home. 
\item I wouldn't hesitate to tell my spouse about her/his inadequacy. 
\item When I discuss, I remind my spouse of her/his inadequacy. 
\item I'm not afraid to tell my spouse about her/his incompetence. 
\end{enumerate}

\end{document}